\definecolor{redish}{rgb}{0.5, 0.08, 0.25}
\definecolor{greenish}{rgb}{0.12, 0.42, 0.18 }
\definecolor{blueish}{rgb}{0.06, 0.10, 0.6}
\definecolor{darkgray}{rgb}{0.4, 0.4, 0.4}         
\theoremstyle{plain}
\newtheorem{theorem}{Theorem}
\newtheorem{lemma}[theorem]{Lemma}
\theoremstyle{definition}
\tikzset{cross/.style={cross out, draw=black, minimum size=2*(#1-\pgflinewidth), inner sep=0pt, outer sep=0pt},
cross/.default={1pt}}
\newcommand{\beq}{\begin{equation}}
\newcommand{\eeq}{\end{equation}}
\newcommand{\beqa}{\begin{eqnarray}}
\newcommand{\eeqa}{\end{eqnarray}}
\newcommand{\bal}{\begin{align}}
\newcommand{\eal}{\end{align}}
\newcommand{\bsp}{\begin{equation}\begin{split}}
\newcommand{\esp}{\end{split}\end{equation}}
\newcommand{\bit}{\begin{itemize}}
\newcommand{\eit}{\end{itemize}}
\newcommand{\ben}{\begin{enumerate}}
\newcommand{\een}{\end{enumerate}}
\newcommand{\nn}{\nonumber}
\newcommand{\AR}{\mathbb{R}}
\newcommand{\tr}{\mathrm{tr}}
\newcommand{\rank}{\mathrm{rank}}
\begin{document}

\title{Recommender systems inspired by the structure of quantum theory}

\author{Cyril J.~Stark\footnote{Center for Theoretical Physics, Massachusetts Institute of Technology, 77 Massachusetts Avenue, Cambridge MA 02139-4307, USA}}

\date{\today}

\maketitle

\begin{abstract}

Physicists use quantum models to describe the behavior of physical systems. Quantum models owe their success to their \emph{interpretability}, to their relation to \emph{probabilistic models} (quantization of classical models) and to their \emph{high predictive power}. Beyond physics, these properties are valuable in general data science. This motivates the use of quantum models to analyze general nonphysical datasets. Here we provide both empirical and theoretical insights into the application of quantum models in data science. In the \emph{theoretical part} of this paper, we firstly show that quantum models can be exponentially more efficient than probabilistic models because there exist datasets that admit low-dimensional quantum models and only exponentially high-dimensional probabilistic models. Secondly, we explain in what sense quantum models realize a useful relaxation of compressed probabilistic models. Thirdly, we show that sparse datasets admit low-dimensional quantum models and finally, we introduce a method to compute hierarchical orderings of properties of users (e.g., personality traits) and items (e.g., genres of movies). In the \emph{empirical part} of the paper, we evaluate quantum models in item recommendation and observe that the predictive power of quantum-inspired recommender systems can compete with state-of-the-art recommender systems like SVD++ and PureSVD. Furthermore, we make use of the interpretability of quantum models by computing hierarchical orderings of properties of users and items. This work establishes a connection between data science (item recommendation), information theory (communication complexity), mathematical programming (positive semidefinite factorizations) and physics (quantum models).

\end{abstract}

\section{Introduction}

Recommendation is a key discipline in machine learning that aims at predicting which items (e.g., movies, books but also events) are liked by which people~\cite{resnick1994grouplens,deshpande2004item,sarwar2001item,koren2009matrix}. Algorithms to compute these predictions are called \emph{recommender systems}. Ideally recommender systems address the following three objectives: (1) predictive power, (2) computational tractability, and (3) interpretability (e.g., to compute visual representations). Here we address these challenges by adopting the \emph{system-state-measurement paradigm~\cite{stark2015expressive}} in the form of a class of models (\emph{quantum models}) physicists use to describe quantum systems~\cite{nielsen2010quantum}.

Respecting the system-state-measurement paradigm amounts to distinguishing the `state of a system' from the `measurement device' we use to examine that system. For example, in recommendation, the \emph{system} is that abstract part of our mind that decides whether we like or dislike an item. Preferences vary from person to person. The correspondingly varying manifestation of a person's system system is described by the \emph{state} of the system. \emph{Measurements} are questions like ``Do you like the movie \emph{Despicable Me}?". Performing measurements on the taste of a person, we can get an increasingly refined understanding of a person's preferences, i.e., we get an increasingly refined understanding of a person's state. Hence, states and measurements are examples for user and item representations. In quantum models, both the states (user representations) and the measurements (item representations) are described in terms of normalized positive semidefinite (psd) matrices. We motivate the use of quantum models in terms of the following five points.

\begin{itemize}
\item		We show that quantum models realize convenient relaxations of both compressed (section~\ref{sect:compression.of.NNMs.2.NPSDs}) and uncompressed (section~\ref{Conservation.of.optimal.solutions.under.relaxation}) probabilistic models (see section~\ref{Sect:Normalized.nonnegative.models}). 
\item		Let $d$ be the dimension of the lowest-dimensional probabilistic model fitting a given dataset, and let $d_{\mathrm{Q}}$ be the dimension of the lowest-dimensional quantum model for the same dataset. Then, $d_{\mathrm{Q}} \leq d$ always. On the other hand, using a results on communication complexity, we show can show that there exist datasets where $d$ is exponential in $d_{\mathrm{Q}}$ (see section~\ref{sect:gap}). Hence, for some datasets, we cannot hope to be able to compute probabilistic models but we can hope to find low-dimensional quantum models.
\item		The close relationship between probabilistic and quantum models allows us to `quantize' methods developed for probabilistic models. We demonstrate this by `quantizing' a technique from~\cite{stark2015expressive} to compute hierarchical orderings of properties (e.g., tags) of items and users (see section~\ref{sect:extraction.of.hierarchy}). 
\item The success of quantum models in physics is largely due to their interpretability which is weaker than the interpretability of probabilistic models but greater than the interpretability of general matrix factorizations. This observation combined with $d_{\mathrm{Q}} \leq d$ (always) and $d_{\mathrm{Q}} \ll d$ on some datasets suggests that \emph{quantum models realize a practical compromise between high interpretability and low dimensionality of representations (e.g., user and item representations).}
\item 	We demonstrate the \emph{predictive power} of quantum models empirically (see section~\ref{sect:numerical.eval}). We observe that with respect to mean-average-error and recall, quantum models outperform major recommender systems like SVD++~\cite{koren2008factorization} and PureSVD~\cite{cremonesi2010performance} on MovieLens datasets. 
\end{itemize}

The remainder of this paper has four parts. In part~1 we provide preliminaries (secions~\ref{sec:notation} and \ref{Sect:Normalized.nonnegative.models}). In part~2 we define quantum models, provide a simple algorithm for their computation and summarize related work (sections~\ref{Sect:NPSD},~\ref{sect:computation.of.NPSDs} and~\ref{sec:related.work}). In part~3 we derive properties of quantum models (section~\ref{sect:compression.of.NNMs.2.NPSDs} to section~\ref{sect:extraction.of.hierarchy}). In part~4 we provide empirical results evaluating the performance of quantum models on movieLens datasets (section~\ref{sect:numerical.eval}). In part~5 we conclude the paper (section~\ref{sect:conclusion}).

\section{Notation}\label{sec:notation}

For any $n \in \mathbb{N}$ we denote by $[n]$ the set $\{1,...,n\}$. Throughout, we consider item recommendation. Here, $u \in [U]$ labels users, $i \in [I]$ labels items and $z \in [Z]$ denote possible ratings users provide for items (e.g., $z \in [5]$ in case of 5-star ratings). By $R \in [Z]^{U \times I}$ we denote the complete rating matrix, i.e., $R_{ui} \in [Z]$ is the rating that user $u$ provides for item $i$. In practice, we only know a subset of the entries of $R$. We use $\Gamma \subseteq [U] \times [I]$ to mark the known entries of $R$ at the time of data analysis. In the evaluation of recommender systems we use $\Gamma_{\text{train}} \subseteq [U] \times [I]$ to mark entries in the training set, and we use $\Gamma_{\text{test}} \subseteq [U] \times [I]$ to mark entries in the test set. A finite probability space is described in terms of a sample space $\Omega = \{ \omega_1, ..., \omega_D \}$, probability distributions $\vec{p}$ and random variables $\hat{E}$ with some alphabet $[Z]$. We use $\Delta = \{ \vec{p} \in \AR^D_+ | \| \vec{p} \|_1 = 1 \}$ to denote the unit simplex.  We denote by $\mathbb{P}_u[ \hat{E}_i = z ]$ the probability for user $u$ to rate item $i$ with $z \in [Z]$. For a matrix $A$, $\| A \|_{\mathrm{F}}$ denotes its Frobenius norm, $\| A\|_1$ denotes the nuclear norm, and $\| A \|$ denotes its operator norm. We denote by $\mathrm{spect}(A)$ the eigenvalues of $A$. We use $S^+(\mathbb{C}^D)$ to refer to the set of hermitian complex $D \times D$ matrices which are positive semidefinite (psd). $I$ denotes the identity matrix.

\section{Preliminaries}\label{Sect:Normalized.nonnegative.models}

By Kolmogorov, random (finite) experiments are described in terms of the following three parts. Firstly, a sample space $\Omega = \{ \omega_1, ..., \omega_D \}$. Secondly, a probability distribution $\vec{p} \in \Delta = \{ \vec{p} \in \AR^D_+ | \| \vec{p} \|_1 = 1 \}$. Thirdly, a random variable $\hat{E}$, i.e., a \emph{function} $\hat{E} : \Omega \rightarrow [Z]$. In the following, $\mathbb{P}[ \hat{E} = z ]$ denotes the probability for measuring the event $\{ \omega | \hat{E}(\omega) = z \} = \hat{E}^{-1}(z)$. A random variable $\hat{E}$ is fully specified by indicator vectors $\vec{E}_z \in \{ 0,1 \}^D$ whose supports equal $\{ \omega | \hat{E}(\omega) = z \} = \hat{E}^{-1}(z)$. Thus, these indicator vectors satisfy $\sum_z \vec{E}_z = (1,...,1)^T$. We observe that $\mathbb{P}[ \hat{E} = z ] = \vec{E}_z^T \vec{p}$.

In this section we show (at the example of item recommendation) how the system-state-measurement paradigm can be adopted in the form of normalized nonnegative models (NNM)~\cite{stark2015expressive}. In the application of NNMs, the system is described by some \emph{sample space} $\Omega = \{ \omega_1, ..., \omega_D \}$. Each user $u$ is represented in terms of a probability distribution $\vec{p}_u \in \Delta$ on $\Omega$, and answers to questions ``Do you like item $i$?" are regarded as samples of a random variable $\hat{E}_i: \Omega \rightarrow [Z]$ assigned to item $i$. In case of 5-star ratings, $Z=5$. In the remainder, $\mathbb{P}_u[ \hat{E}_i = z ]$ denoted the probability for user $u$ to rate $i$ with value $z \in [Z]$. Therefore, 
\beq\label{few45jkljfnk}
	\mathbb{P}_u[ \hat{E}_i = z ] = \mathbb{P}_u[ \hat{E}^{-1}_i(z) ] = \vec{E}_{iz}^T \vec{p}_u
\eeq
where $\vec{E}_{iz} \in \{0,1\}^D$ is defined through
\beq\label{fej435hjhj}
	\bigl( \vec{E}_{iz} \bigr)_j 
	= \left\{ \begin{array}{ll}  1,   & \text{ if $\omega_j \in \hat{E}^{-1}_i(z)$},      \\ 0,  &\text{ otherwise.}   \end{array} \right.
\eeq
By construction, $\sum_z \vec{E}_{iz} = (1,...,1)^T$. Let $\mathcal{M}_0$ be the set of allowed values of $( \vec{E}_{1}, ..., \vec{E}_{Z} )$, i.e., $\mathcal{M}_0 = \Bigl\{ ( \vec{E}_{1}, ..., \vec{E}_{Z} ) \in \{0,1\}^{D \times Z } \Bigl|  \sum_z \vec{E}_{iz} = (1,...,1)^T \Bigr\}$. We call the collection of vectors $\bigl( (\vec{p}_u)_u, (\vec{E}_{iz})_{iz} \bigr)$ a \emph{Kolmogorov factorization} if $\vec{p}_u \in \Delta$ for all users $u \in [U]$ and if $(\vec{E}_{iz})_{z \in [Z]} \in \mathcal{M}_0$ for all items $i \in [I]$ (see~\cite{stark2015expressive}). The convex relaxation of $\mathcal{M}_0$ is
\[
	\mathcal{M} := \Bigl\{ ( \vec{E}_{1}, ..., \vec{E}_{Z} ) \in \AR_+^{D \times Z } \Bigl|  \sum_z \vec{E}_{iz} = (1,...,1)^T \Bigr\}.
\] 
The tuple of vectors $\bigl( (\vec{p}_u)_u, (\vec{E}_{iz})_{iz} \bigr)$ is a \emph{normalized nonnegative model} if $\vec{p}_u \in \Delta$ for all users $u \in [U]$ and if $(\vec{E}_{iz})_{z \in [Z]} \in \mathcal{M}$ for all items $i \in [I]$ (see~\cite{stark2015expressive} for simple examples). 

NNMs allow for interpreting user preferences as mixture of interpretable user stereotypes~\cite{stark2015expressive}, they allow for the computation of hierarchical orderings of tags of users and items~\cite{stark2015expressive}, they allow for fast inference of users' states in terms of interpretable questions~\cite{stark2015amortized}, and NNMs can be used to compute operational user-user and item-item distance measures~\cite{stark2015top}. Consequently, the relaxation of Kolmogorov factorizations $\mapsto$ NNMs preserves a lot of the interpretability of Kolmogorov factorizations.

\subsection{Categorical variables}\label{sect:noncat}

In the definition of NNMs, random variables are categorical. Hence, NNMs can in principle be used to fit a wide range of data. For instance, in recommendation, side information about the occupation of users can be modeled in terms of a categorical random variable.

However, we do not make use of all available information when treating ordered data in a categorical manner. For instance, star ratings (provided by users for items) are ordered because a 4-star rating is better than a 3-star rating. Hence, if we want to use data economically, it can be beneficial to interpret ratings $R_{ui} \in [Z]$ of an item $i$ provided by a user $u$ as approximation of $\mathbb{P}[u \text{ likes } i]$. More precisely,
\beq\label{Eq:alternative.interpretation.of.data}
	R_{ui}/Z \approx \mathbb{P}[u \text{ likes } i].
\eeq
Then, the random variables $\hat{E}_i$ are binary. Their outcomes are interpreted as `\emph{I like item $i$}' and `\emph{I dislike item $i$}', respectively.

\section{Quantum models}\label{Sect:NPSD}

By~\eqref{few45jkljfnk}, NNMs are specified in terms of nonnegative vectors $\in \AR^D_+$. Equivalently, we could describe NNMs through diagonal psd matrices:
\beq\begin{split}\label{fekj45k653}
	\vec{p}_u 		&\mapsto \sum_{j} (\vec{p}_u)_j  \vec{e}_j \vec{e}_j^T =: \rho^{(0)}_u, \\ 
	\vec{E}_{iz} 	&\mapsto \sum_{j} (\vec{E}_{iz})_j \vec{e}_j \vec{e}_j^T =: E^{(0)}_{iz}
\end{split}\eeq
where $(\vec{e}_j)_n = \delta_{jn}$ denotes the canonical basis. The two descriptions $\vec{p}_u, \vec{E}_{iz}$ and $\rho^{(0)}_u, E^{(0)}_{iz}$ are entirely equivalent if 
\beq\label{fej4h35hfen}
	\mathbb{P}_u[ \hat{E}_i = z ] = \sum_{ij} (\rho^{(0)}_u)_{ij} (E^{(0)}_{iz})_{ij} = \tr(\rho^{(0)}_u E^{(0)}_{iz}). 
\eeq
But the description $\rho^{(0)}_u, E^{(0)}_{iz}$ motivates a natural \emph{relaxation} of NNMs by relaxing the constraint that the positive semidefinite (psd) matrices $\rho^{(0)}_u, E^{(0)}_{iz}$ are diagonal. This relaxation is sometimes called `quantization' in the physics literature. After quantization, the state of a user $u$ is represented by $\rho_u \in \Delta'$,
\beq\label{fewk43afe5hjA}
	\Delta' = \bigl\{ \rho \in S^+(\mathbb{C}^D) \bigl| \tr(\rho) = 1 \bigr\},
\eeq
and an item $i$ is represented by $(E_{iz})_{z \in [Z]} \in \mathcal{M}'$,
\beq\label{fewk43afe5hjB}
	\mathcal{M}' =		\Bigl\{ (E_z)_{z \in [Z]} \in S^+(\mathbb{C}^D)^Z \Bigl| \sum_z E_z = \mathbb{I} \Bigr\}.
\eeq
In quantum information~\cite{nielsen2010quantum}, $\Delta'$ is the space of so called \emph{quantum states} (aka \emph{density matrices}) and $\mathcal{M}'$ is the space of \emph{quantum measurements} (aka \emph{positive operator valued measure}). For this reason, we call the tuple of user and item matrices $\bigl( (\rho_u)_u, (E_{iz})_{iz} \bigr)$ a \emph{quantum model} if $\rho_u \in \Delta'$ for all users $u \in [U]$ and if $(E_{iz})_{z \in [Z]} \in \mathcal{M}'$ for all items $i \in [I]$. Inspired by~\eqref{fej4h35hfen}, the rule
\beq\label{fej4h35hfen}
	\mathbb{P}_u[ \hat{E}_i = z ] = \sum_{ij} \overline{(\rho_u)}_{ij} (E_{iz})_{ij} =  \tr(\rho_u E_{iz})
\eeq
relates user and item representations to observable probabilities. Here, $\bar{z}$ denotes complex conjugation of $z$. Up to normalization constraints, models in terms of $\rho_u$ and $E_{iz}$ are intimately related to psd factorizations~\cite{fiorini2012linear,gouveia2013lifts} studied in mathematical programming; see~\cite{fawzi2014positive} for a recent review. 

\emph{Example.} Let $D = 2$ and $Z = 2$. Then, for 
\[
	\rho_u = \frac{1}{5} \left( \begin{array}{cc}
				1  & 2    \\
				 2 & 4   
				\end{array}\right), \;
	E_{i1} = \frac{1}{10} \left( \begin{array}{cc}
				1 & 3    \\
				 3 & 9   
				\end{array}\right), \;
	E_{i2} = \frac{1}{10} \left( \begin{array}{cc}
				9 & -3    \\
				 -3 & 1   
				\end{array}\right),			
\]
we get
\[
	\mathbb{P}_u[ \hat{E}_i = 1 ] = \tr(\rho_u E_{i1}) = \frac{49}{50}, \; \mathbb{P}_u[ \hat{E}_i = 2 ] = \tr(\rho_u E_{i2}) = \frac{1}{50},. 
\]

\section{Heuristic computation of quantum models}\label{sect:computation.of.NPSDs}

We denote by $R_{ui} \in [Z]$ the rating user $u$ provides for item $i$. Let $\Gamma \subseteq [U] \times [I]$ be such that $(u,i) \in \Gamma$ if and only if $R_{ui}$ is known. We adopt~\eqref{Eq:alternative.interpretation.of.data} and interpret $z=1$ as `like' and $z=2$ as `dislike'. Then, the computation of NNMs amounts to approximately solving some variant of the optimization problem
\beq\label{fejw4h5jh}
\begin{split}
	\mathrm{minimize} 	& \ \ \sum_{(u,i) \in \Gamma} \bigl( \vec{E}_{i1}^T \vec{p}_u - R_{ui}/Z \bigr)^2 \\ 
	\text{s.t.}  				& \ \ \ \vec{p}_u \in \Delta, (\vec{E}_{i1},\vec{E}_{i2}) \in \mathcal{M}
\end{split}
\eeq
Similarly, the computation of quantum models amounts to solving a variant of
\beq\label{fewk43gssae5hdwdj}
\begin{split}
	\mathrm{minimize} 	& \ \ \sum_{(u,i) \in \Gamma} \bigl( \tr(E_{i1} \rho_u) - R_{ui}/Z \bigr)^2 \\ 
	\text{s.t.}  				& \ \ \ \rho_u \in \Delta', (E_{i1},E_{i2}) \in \mathcal{M}'
\end{split}
\eeq
A simple approach to approximately solve~\eqref{fewk43gssae5hdwdj} is through alternating constrained optimization where we distinguish between the update of the user representation and the update of the item representation. When updating the users we keep the items fixed and when updating the items we keep the users fixed. 

Choosing a convenient initialization of the user matrices/vectors improves the performance of alternating optimization. Next, we describe one particular possibility to find convenient initializations. Our initialization-strategy is based on the observation that real-life data is oftentimes subject to a strong selection-bias towards high ratings; see for example figure~\ref{fig:fig_about_selection_bias} about the MovieLens 1M dataset.\footnote{http://files.grouplens.org/datasets/movielens/ml-1m-README.txt} Selection biases of that kind can be detrimental for item recommendation~\cite{steck2011item}. To illustrate this, we imagine that all the provided ratings are 5-star ratings. Then, the 1(!)-dimensional factorization with $p_u = 1$ and $E_{i1} = ... = E_{i4} = 0$, $E_{i5} = 1$ fits the data perfectly but is expected to have extremely poor predictive power because it predicts that all users like all items.

\begin{figure}[tbp]
\centering
\includegraphics[width=0.4\columnwidth]{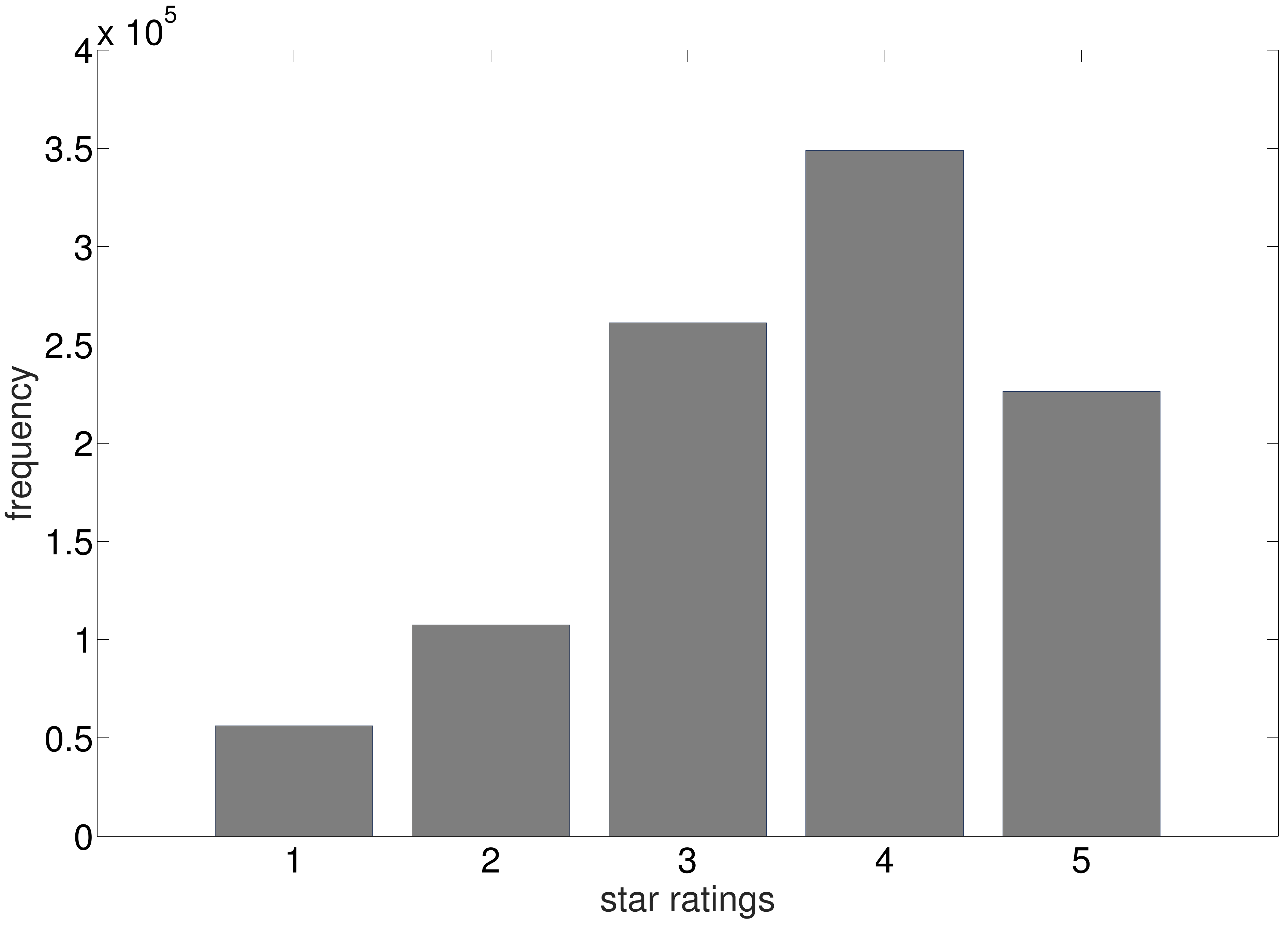}
\caption{Frequency of ratings as a function of $z \in [5]$ at the example of the MovieLens 1M dataset.}
\label{fig:fig_about_selection_bias}
\end{figure}

The selection bias is a consequence of the simple fact that consumers usually only consider watching a movie which they expect to like. Hence, conditioned on a movie being watched, it is more likely to be rated highly. On the other hand, if users have not yet watched a movie, then this suggests that they would rate the movie badly if they watched it. Therefore, we should expect a movie $i$ to be rated badly by user $u$ if $u$ has not yet rated $i$. This side-information can be used to improve the performance of recommender systems~\cite{cremonesi2010performance}. 

How we make use of knowing about the selection-bias depends on the figure of merit. When aiming for large \emph{recall},\footnote{The appendix of~\cite{stark2015expressive} contains a reminder of the definitions of recall, MAE and RMSE.} we suggest to assume that all the unknown entries of the user-item matrix $R$ are zero when approximating~\eqref{fewk43gssae5hdwdj}---the precise star value does not matter too much and those zero-fillings capture a general trend. On the other hand, when aiming for small \emph{mean-average-error (MAE)}, we should not proceed in exactly the same manner because setting unknown entries equal to zero can distort the precise value of the predicted star ratings too much. This is why we suggest to only use the zero-fillings during initial iterations of alternating optimization (to end up with an initialization of the users and items that is better than a random initialization). We observe empirically that said strategies for initialization indeed improve predictive power. Zero-fillings of this kind are common in the design of recommender systems~\cite{cremonesi2010performance}.

Adopting~\eqref{Eq:alternative.interpretation.of.data}, we thus arrive at  Algorithm~\ref{Alg:Constrained.least.squares.for.NPSD} for solving~\eqref{fewk43gssae5hdwdj}. Some practically-oriented readers might refrain from implementing Algorithm~\ref{Alg:Constrained.least.squares.for.NPSD} because it requires the solution of SDPs. However, for the purpose of recommendation, we do not require a full-fledged SDP solver like SDPT3~\cite{toh1999sdpt3}---a simple barrier function to enforce the SDP constraint approximately is sufficient because we are not primarily interested in high precision. 

\begin{algorithm}
\caption{Alternating optimization for quantum models}\label{Alg:Constrained.least.squares.for.NPSD}
\begin{algorithmic}[1]
\State 	Fix $D$ (e.g., by cross validation).
\State	For all $u$, initialize $\rho_u = \vec{v}_u^T \vec{v}_u$ where $\vec{v}_u$ is sampled uniformly from the complex unit sphere.
\State	For all items $i$, solve the semidefinite program $\min_{(E_{i1},E_{i2}) \in \mathcal{M}'} \sum_{u: (u,i)\in\Gamma} \bigl( \tr({E}_{i1} \rho_u) - R_{ui}/Z \bigr)^2$.\label{fewkjh45h}
\State	For all users $u$, solve the semidefinite program $\min_{\rho_u \in \Delta'}  \sum_{i: (u,i)\in\Gamma} \bigl( \tr({E}_{i1} \rho_u) - R_{ui}/Z \bigr)^2$.\label{fjhHJb3}
\State 	Repeat steps 3 and 4 until a stopping criteria is satisfied; e.g., until a maximum number of iterations is reached. For the first 2 iterations, \emph{when optimizing for MAE}, we pretend we knew all of $R$ by setting unknown entries equal to zero. \emph{When optimizing for recall}, we use those zero-fillings during all iterations.
\end{algorithmic}
\end{algorithm}

\section{Related work}\label{sec:related.work}

{\bf Data mining.} In the study of recommendation we are interested in events like \emph{`$u$ buys item $i$'}, \emph{`$u$ attends event $i$'}, etc. We abbreviate these events by $(u,i)$. In \emph{aspect models}, \emph{pLSA}~\cite{hofmann1999latent,hofmann1999probabilistic} (and similarly in \emph{latent Dirichlet allocation}~\cite{blei2003latent}) we regard the value $(u,i)$ as random variable with distribution
\beq\label{fwfefeg}
	\mathbb{P}[u,i] = \sum_{k} \; \mathbb{P}[k] \; \mathbb{P}[u|k] \; \mathbb{P}[i|k].
\eeq
Thus, when adopting~\eqref{fwfefeg}, we model $\mathbb{P}[u,i]$ as (possibly scaled) inner product between vectors $(\mathbb{P}[u|k])_k$ and $(\mathbb{P}[i|k])_k$. This is reminiscent of the inner product~\eqref{fej4h35hfen} where one half of the inner product is the non-commutatitive analog of a probability distribution associated to $u$. However, \eqref{fwfefeg} and~\eqref{fej4h35hfen} disagree on the second half of said inner product. In~\eqref{fwfefeg}, the second factor is another probability distribution. In~\eqref{fej4h35hfen} on the other hand, the second factor $E_{iz}$ is constrained through the existence of $E_{i1}, ..., E_{iz-1}, E_{iz+1}, .., E_{iZ}$ such that $(E_{i1}, ..., E_{iZ}) \in \mathcal{M}'$. 

In a similar manner, quantum models are related to more general NMF-based models~\cite{ma2011recommender,hyvonen2008interpretable} because in both approaches we model data in terms of cone-constraint vectors: nonnegative vectors in case of NMF and psd matrices in case of quantum models. 

The description of data in terms of $\rho_u$, $E_{iz}$ allows for the description of categorical random variables (see section~\ref{sect:noncat}) and we can straightforwardly extract hierarchical structures from quantum models; see section~\ref{sect:extraction.of.hierarchy}. Aspect models can also be used for the description of categorical variables~\cite{hofmann1999latent} and for the computation of hierarchical structures~\cite{blei2010nested,paisley2015nested}. Quantum models realize an alternative approach for meeting these objectives. Furthermore, categorical random variables can also be modeled by a family of graphical models called \emph{multinomial mixture models}~\cite{marlin2003modeling}. These model class puts forward a single distribution $\theta$ (only one distribution for all users) over so called user attitudes. Thus, in contrast to NNMs and quantum models, the prediction for user $u$'s rating of item $i$ is not a function of an individual user vector and an item vector. We regard the separate description of users and items to be important because it is one pillar of interpretability of models like NNMs and quantum models.

\emph{Probabilistic matrix factorization} (PMF,~\cite{mnih2007probabilistic}) is a very practical class of models related to quantum models. In PMF we regard entries $R_{ui}$ of the rating matrix as independent Gaussian random variables with mean $\vec{U}_u^T \vec{V}_i$ and variance $\sigma$. Therefore, as in case of aspect models and NMF, the description of ratings is the result of inner products between some vectors. In contrast to PMF, quantum models do not need to assume a Gaussian distribution for ratings. Once the dimension $D$ has been fixed by cross validation, we do not need to assume a particular family of distributions for $R_{ui}$.

\emph{Bayesian nonnegative matrix factorization (BNMF)}~\cite{paisley2015bayesian} is closely related to PMF because ratings are sampled from a particular parametric family of distributions and because we impose a low-rank (also nonnegative in BNMF) model onto the parameters. BNMF assumes Poisson-distributed ratings. Hence, using BNMF we can model discrete but not categorical random variables. 

In Algorithm~\ref{Alg:Constrained.least.squares.for.NPSD} we make use of the selection bias in recommendation datasets. Selection biases have been studied in more detail in the paper~\cite{steck2011item}. This paper introduces the recommendation system AllRank which successfully accounts for the selection bias. We could have also included $r_m$ and $w_m$ into the training of quantum models but we felt that this would distract from the simplicity of inference of quantum models through alternating optimization. AllRank allows for great predictions but not for strong interpretability.

{\bf Quantum theory.} In physics we distinguish between two different ways to describe quantum experiments. The \emph{first possibility} is simply to describe what we do and what we observe. For instance, we could describe an experiment in terms of two manuals and a dataset. The first manual provides an experimentalist with all the necessary instructions to prepare some quantum states and to apply some processes. The second manual specifies how to build and apply some measurement devices. The dataset is a record of measurement results. The \emph{second approach} to describe quantum experiments is in terms of density matrices, measurement matrices and processes~\cite{nielsen2010quantum}. This mathematical description allows for predictions of future measurement outcomes, and it makes the experiment highly interpretable. 

A significant part of physics is about inference, namely, the translation of the first empirical description (i.e., the two manuals plus dataset) into the second mathematical description (i.e., states, measurements and processes). Here again we distinguish between two different approaches. The more traditional approach uses physical heuristics to translate the manuals and the dataset into a theoretical model of the experiment (e.g., through quantization of classical models). However, in the past two decades we observed limits of physical heuristics as the complexity of modern quantum experiments (e.g., superconducting qubits) appears to be too high to be accurately described in terms of heuristics~\cite{merkel2012self,kimmel2014robust}. The resulting absence of accurate theoretical models for existing quantum devices presents a severe bottleneck for the design of new quantum devices and quantum computers. This deficiency led to the development of methods for self-consistent tomography~\cite{stark2014self,rosset2012imperfect,navascues2015non,navascues2008convergent,stark2014compressibility,gallego2010device,harrigan2007representing,wehner2008lower,stark2012global,jackson2015detecting,merkel2012self,blume2013robust,kimmel2014robust,kimmel2015quantum,johnson2015demonstration,kimmel2015robust,schwemmer2015systematic,ferrie2014quantum,Dugas2014characterizing,monras2014quantum,sikora2015minimum,jackson2015detecting,greenbaum2015introduction} where we try to infer quantum models in a way that avoids physical heuristics. In self-consistent tomography, an experiment is regarded as a black box that accepts settings (specifying the state to prepare, the process to apply, and the measurement to perform) and outputs measurement results. This black box interpretation opens up the possibility to use methods from self-consistent tomography to fit quantum models to \emph{arbitrary non-physical datasets}. The quantum models we obtain can then be used to interpret data through a generalization probability theory: quantum theory. Here, in this paper, we introduce this quantum perspective on general datasets at the example of item recommendation.

The paper~\cite{stark2015learning} shows that exact inference of lowest-dimensional quantum models is \emph{NP}-hard.

Here, in this paper, we are using structural properties underlying quantum models. Alternatively, we can try to build quantum devices to run quantum algorithms for machine learning~\cite{anguita2003quantum,servedio2004equivalences,aimeur2006machine,rendle2009bpr,neven2009training,aimeur2013quantum,pudenz2013quantum,lloyd2013quantum,rebentrost2014quantum,wiebe2014quantum,lloyd2014quantum}. We observe a quickly growing interest in this new and promising area at the intersection of physics and machine learning. 

{\bf Quantum foundations.} We argue that quantum models realize a useful tradeoff between the low complexity of general matrix factorizations and the high interpretability of NNMs; see figure~\ref{Fig:dim.interpretability.tradeoff}. This discussion fits naturally into the discipline called quantum foundations where researchers try to determine the operational consequences of different choices of state and measurement spaces~\cite{hardy2001quantum,Aaronson2007,muller2012structure,gross2010all,barnum2012teleportation,colbeck2011no,pfister2013information,ried2015quantum}. The significance of the comparison of different regularization schemes in data science confirms the practical value of results produced by researchers working on quantum foundations. Here we give an argument for quantum models in terms of the operational meaning of interpretability. Unfortunately, interpretability appears to be hard to capture axiomatically. Consequently, it is not clear how we can build axioms for quantum theory on the basis of interpretability.

{\bf Cognition and economics.} Item recommendation is related to the study of cognition and economics because item recommendation aims at capturing peoples preferences. In these disciplines, quantum models have become increasingly popular in the past years; see for instance~\cite{busemeyer2012quantum,haven2013quantum,yukalov2009processing,sornette2014physics,aerts2013concepts} and references therein.

\section{Quantum models as compression of NNMs}\label{sect:compression.of.NNMs.2.NPSDs}

Considering $Z$ and the sparsity of item vectors $\vec{E}_{iz}$ as a function of $D$, we call a NNM \emph{pseudo sparse} if
\begin{itemize}
\item	$Z$ small, i.e., $Z \sim \mathcal{O}(1)$ in $D$
\item For all items $i$ there exists $z' \in [Z]$ such that $$\underbrace{\vec{E}_{y,1}, ..., \vec{E}_{y,z'-1}}_{\text{sparse}}, \vec{E}_{yz'}, \underbrace{\vec{E}_{y,z'+1}, ..., \vec{E}_{y,Z}}_{\text{sparse}}$$ where `sparse' means `sparsity $\sim \mathcal{O}(1)$ in $D$'.
\end{itemize}
Similarly, we call a quantum model $\bigl( (\rho_{u})_{u}, (E_{iz})_{iz} \bigr)$ \emph{pseudo-low rank}~\cite{stark2014compressibility} if
\begin{itemize}
\item	$Z$ small, i.e., $Z \sim \mathcal{O}(1)$ in $D$
\item For all items $i$ there exists $z' \in [Z]$ such that $$\underbrace{E_{y,1}, ..., E_{y,z'-1}}_{\text{low-rank}}, E_{yz'}, \underbrace{E_{y,z'+1}, ..., E_{y,Z}}_{\text{low-rank}}$$ where `low rank' means `rank $\sim \mathcal{O}(1)$ in $D$'.
\end{itemize}
The following Theorem~\ref{cor:fj345ewklnnr} shows that pseudo-sparse NMMs can be compressed into low-dimensional quantum models.

\begin{theorem}\label{cor:fj345ewklnnr}
	Let $J = U + IZ$, let $\varepsilon$ satisfy $\frac{4}{\sqrt{JD}} \leq \varepsilon \leq \frac{1}{2}$ and let $\bigl((\vec{p}_{u})_{u}, (\vec{E}_{iz})_{iz} \bigr)$ be a $D$-dimensional NNM. Assume
	\begin{itemize}
				\item[]	$\bigl((\vec{p}_{u})_{u}, (\vec{E}_{iz})_{iz} \bigr)$ is \emph{pseudo sparse}, and			
				\item[]	$d = \mathcal{O}\Bigl( \frac{1}{\varepsilon^2} \; \ln\bigl( 4JD \bigr) \Bigr)$,
	\end{itemize}
	Then, there exists a $d$-dimensional quantum model $\bigl( (\rho'_{u})_{u}$ $(E'_{iz})_{iz} \bigr)$ satisfying $\bigl| \vec{p}^T_{u} \vec{E}_{iz} - \tr(\rho'_{u}E'_{iz}) \bigr| \leq \mathcal{O}(\varepsilon)$ for all $u,i,z$.
\end{theorem}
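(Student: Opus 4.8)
The plan is to obtain the $d$-dimensional quantum model by a randomized ``Gaussian replacement'' in the diagonal representation~\eqref{fekj45k653} of the NNM --- i.e.\ by applying a Johnson--Lindenstrauss embedding to the coordinate directions $\vec{e}_1,\dots,\vec{e}_D\in\AR^D$ --- followed by two small corrections that restore the constraints defining $\Delta'$ and $\mathcal{M}'$. \emph{First} I would reduce to the sparse item vectors. For each item $i$ let $z_i\in[Z]$ be a pseudo-sparse index in the sense of the hypothesis, so $\vec{E}_{iz}$ has support $S_{iz}$ with $|S_{iz}|=\mathcal{O}(1)$ for all $z\neq z_i$; set $T_i:=\bigcup_{z\neq z_i}S_{iz}$, so $|T_i|\le(Z-1)\,\mathcal{O}(1)=\mathcal{O}(1)$. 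Since $\sum_z\vec{E}_{iz}=(1,\dots,1)^T$ and $\|\vec{p}_u\|_1=1$ we have $\vec{p}_u^T\vec{E}_{iz_i}=1-\sum_{z\neq z_i}\vec{p}_u^T\vec{E}_{iz}$. Hence it suffices to produce psd matrices $\rho'_u$ with $\tr(\rho'_u)=1$ and, for $z\neq z_i$, psd matrices $E'_{iz}$ with $\tr(\rho'_uE'_{iz})=\vec{p}_u^T\vec{E}_{iz}+\mathcal{O}(\varepsilon)$ and $\sum_{z\neq z_i}E'_{iz}\preceq\mathbb{I}$: then $E'_{iz_i}:=\mathbb{I}-\sum_{z\neq z_i}E'_{iz}\succeq0$ yields $(E'_{iz})_z\in\mathcal{M}'$, and $\tr(\rho'_uE'_{iz_i})=1-\sum_{z\neq z_i}\tr(\rho'_uE'_{iz})=\vec{p}_u^T\vec{E}_{iz_i}+\mathcal{O}(\varepsilon)$ because $Z=\mathcal{O}(1)$.

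\emph{Second}, draw i.i.d.\ $\vec{h}_1,\dots,\vec{h}_D\in\AR^d$ with $\vec{h}_j\sim\mathcal{N}\bigl(0,\tfrac1d\mathbb{I}_d\bigr)$ and set $\tilde\rho_u:=\sum_{j=1}^D(\vec{p}_u)_j\,\vec{h}_j\vec{h}_j^T$ and, for $z\neq z_i$, $\tilde E_{iz}:=\sum_{j\in S_{iz}}(\vec{E}_{iz})_j\,\vec{h}_j\vec{h}_j^T$; both are psd by construction. By $\chi^2$- and Gaussian-tail bounds and a union bound over the $\mathcal{O}(D^2)$ events involved, for $d=\mathcal{O}\bigl(\varepsilon^{-2}\ln(4JD)\bigr)$ and $\varepsilon$ in the stated range the event
\[
	\max_{j}\bigl|\,\|\vec{h}_j\|^2-1\,\bigr|\le c\varepsilon,\qquad \max_{j}\bigl|\,\|\vec{h}_j\|^4-1\,\bigr|\le c\varepsilon,\qquad \max_{j\neq k}\bigl|\langle\vec{h}_j,\vec{h}_k\rangle\bigr|\le c\varepsilon
\]
has positive probability; I would fix a realization in it. (The budget is ample, as $\ln(4JD)\ge\ln D$.)

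\emph{Third}, on this realization I would check the three requirements. (i)~\emph{Accuracy:} $\tr(\tilde\rho_u\tilde E_{iz})=\sum_{k\in S_{iz}}(\vec{p}_u)_k(\vec{E}_{iz})_k\|\vec{h}_k\|^4+\sum_{k\in S_{iz}}(\vec{E}_{iz})_k\sum_{j\neq k}(\vec{p}_u)_j\langle\vec{h}_j,\vec{h}_k\rangle^2$; the first sum equals $\vec{p}_u^T\vec{E}_{iz}\pm\mathcal{O}(\varepsilon)$ since $\sum_k(\vec{p}_u)_k(\vec{E}_{iz})_k\le\|\vec{p}_u\|_1=1$, and the second is $\le|S_{iz}|(c\varepsilon)^2\|\vec{p}_u\|_1=\mathcal{O}(\varepsilon^2)$, so $\tr(\tilde\rho_u\tilde E_{iz})=\vec{p}_u^T\vec{E}_{iz}+\mathcal{O}(\varepsilon)=\mathcal{O}(1)$. (ii)~\emph{State:} $\tr(\tilde\rho_u)=\sum_j(\vec{p}_u)_j\|\vec{h}_j\|^2=1+\mathcal{O}(\varepsilon)$, so $\rho'_u:=\tilde\rho_u/\tr(\tilde\rho_u)\in\Delta'$, and dividing by $1+\mathcal{O}(\varepsilon)$ keeps $\tr(\rho'_u\tilde E_{iz})=\vec{p}_u^T\vec{E}_{iz}+\mathcal{O}(\varepsilon)$. (iii)~\emph{Measurement:} $\sum_{z\neq z_i}\tilde E_{iz}=\sum_{j\in T_i}w^{(i)}_j\vec{h}_j\vec{h}_j^T$ with $w^{(i)}_j:=\sum_{z\neq z_i}(\vec{E}_{iz})_j\in[0,1]$; because $|T_i|=\mathcal{O}(1)$, the good event gives $\bigl\|[\langle\vec{h}_j,\vec{h}_k\rangle]_{j,k\in T_i}-\mathbb{I}_{|T_i|}\bigr\|=\mathcal{O}(\varepsilon)$, hence $\bigl\|\sum_{z\neq z_i}\tilde E_{iz}\bigr\|\le(1+\mathcal{O}(\varepsilon))\max_jw^{(i)}_j\le1+c'\varepsilon=:C$. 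Then $E'_{iz}:=\tilde E_{iz}/C$ ($z\neq z_i$) satisfies $\sum_{z\neq z_i}E'_{iz}\preceq\mathbb{I}$ and $\tr(\rho'_uE'_{iz})=(\vec{p}_u^T\vec{E}_{iz}+\mathcal{O}(\varepsilon))/(1+\mathcal{O}(\varepsilon))=\vec{p}_u^T\vec{E}_{iz}+\mathcal{O}(\varepsilon)$; defining $E'_{iz_i}$ as in the first step completes a $d$-dimensional quantum model with the claimed accuracy.

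\emph{The main obstacle} is step~(iii). Without pseudo sparsity $\sum_{z\neq z_i}\tilde E_{iz}$ would have operator norm only $\mathcal{O}(1)$, and the rescaling needed to enforce $\sum_zE'_{iz}=\mathbb{I}$ would then distort every predicted probability by a \emph{constant} factor instead of by $\mathcal{O}(\varepsilon)$; it is precisely $|T_i|=\mathcal{O}(1)$ that turns the pairwise near-orthonormality of the $\vec{h}_j$ into a near-identity Gram matrix on $T_i$ and thereby keeps the normalizing constant at $C=1+\mathcal{O}(\varepsilon)$. Apart from that, the only delicate point is organizing the union bound so that a single draw of $\vec{h}_1,\dots,\vec{h}_D$ serves all $u,i,z$ simultaneously at dimension $d=\mathcal{O}\bigl(\varepsilon^{-2}\ln(4JD)\bigr)$.
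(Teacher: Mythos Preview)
Your argument is correct. It differs from the paper's proof only in packaging: the paper does not carry out the Johnson--Lindenstrauss construction directly but instead first embeds the NNM into a $D$-dimensional quantum model via the diagonal map~\eqref{fekj45k653} (so that pseudo-sparse becomes pseudo-low-rank), and then invokes a black-box compression result for pseudo-low-rank quantum models (Theorem~\ref{Thm:fjewklnnr}, quoted from~\cite{stark2014compressibility}) to obtain the $d$-dimensional model. What you wrote is essentially an unpacking of that black box in the diagonal case: your map $\vec{e}_j\vec{e}_j^T\mapsto\vec{h}_j\vec{h}_j^T$ is the random rank-preserving compression underlying Theorem~\ref{Thm:fjewklnnr}, and your three-step repair (treat the non-sparse outcome $z_i$ as a remainder, renormalize $\tilde\rho_u$, rescale the sparse $\tilde E_{iz}$ using the $\mathcal{O}(1)$-size Gram block on $T_i$) is exactly how one restores membership in $\Delta'$ and $\mathcal{M}'$ after such a compression. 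The paper's route is shorter because it outsources all of this to~\cite{stark2014compressibility}; your route is self-contained and makes explicit \emph{where} pseudo-sparsity is used (your step~(iii)), which the paper's one-line reduction hides inside the cited theorem.
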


We prove Theorem~\ref{cor:fj345ewklnnr} in the appendix. By Theorem~\ref{cor:fj345ewklnnr}, $\Delta'$ and $\mathcal{M}'$ (see section~\ref{Sect:NPSD}) realize relaxations of the compression of $\Delta$ and $\mathcal{M}$ (see section~\ref{Sect:Normalized.nonnegative.models}). Semidefinite programming allows for the efficient optimization over $\Delta$ and $\mathcal{M}$. Of course, we could also characterize compressed NNMs in terms of the polyhedral cone $C$ which is the image of $\AR^D_+$ under a Johnson-Lindenstrauss-type compression mapping~\cite{johnson1984extensions}. However, that cone has \emph{exponentially many faces} when $D$ is exponential in $d$. The time complexity of linear programming increases polynomially in the number of half-space constraints. It follows that optimizing over said polyhedral cone $C$ is very expensive. Hence, we cannot reconstruct a compressed NNM by running methods like alternating optimization on $C$. However, we can optimize \emph{efficiently over the psd cone from Corollary~\ref{cor:fj345ewklnnr} which realizes a relaxation of $C$}. Hence, running methods like alternating optimization on the psd cone, we can compute relaxed compressed NNMs.

\section{Fitting sparse data}\label{sect:overfitting}

Quantum models would not be of any use  if they could only be fitted to a small class of empirical data. Here we show that this is not the case by proving that quantum models can be fitted to sparse datasets. This is significant because most real-world datasets are sparse.\footnote{http://www.librec.net/datasets.html} 

Recall that $\Gamma \subseteq [I] \times [Z]$ marks the positions of known entries of the rating matrix $R$. By the following Theorem~\ref{thm:fkerjtk4jk}, we can find \emph{low}-dimensional quantum models fitting $R_{\Gamma}$ with the property that for all $(u,i) \in \Gamma$,
\beq\begin{split}
	\tr(\rho_{u}E_{iz})
	\left\{\begin{array}{ll}
  		\geq 1 - \varepsilon, 	& \text{ if $R_{ui} = z$},      \\
  		\leq \varepsilon,		& \text{ otherwise,} 
	\end{array}\right.
\end{split}\eeq
if $\Gamma$ is sparse.

\begin{theorem}\label{thm:fkerjtk4jk}
	 Denote by $U$ and $I$ the number of users and items, respectively. Let $J = U + IZ$ and let $\varepsilon$ satisfy $\frac{4}{\sqrt{JD}} \leq \varepsilon \leq \frac{1}{2}$. Assume that the number of times an item has been rated is constant in $U$. Then,  for
	\[
		d = \mathcal{O}\Bigl( \frac{1}{\varepsilon^2} \; \ln\bigl( 4JU \bigr) \Bigr),
	\]
	there exists a $d$-dimensional quantum model $\bigl( (\rho_{u})_{u}$ $(E_{iz})_{iz} \bigr)$ satisfying $\bigl| \delta_{z,R_{ui}} - \tr(\rho_{u}E_{iz}) \bigr| \leq \mathcal{O}(\varepsilon)$ for all $(u,i) \in \Gamma$ and $z \in [Z]$.
\end{theorem}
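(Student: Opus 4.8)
The plan is to deduce Theorem~\ref{thm:fkerjtk4jk} from Theorem~\ref{cor:fj345ewklnnr}. I will write down an \emph{exact}, pseudo-sparse NNM of dimension $D=U$ that interpolates the observed data $R_\Gamma$, and then compress it with Theorem~\ref{cor:fj345ewklnnr}. The sparsity hypothesis ``the number of ratings per item is constant in $U$'' enters exactly once: to certify that this auxiliary NNM is pseudo sparse.

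\emph{The exact NNM and its pseudo sparsity.} Set $D:=U$ and let $\vec e_1,\dots,\vec e_U$ be the canonical basis of $\AR^U$. Put $\vec p_u:=\vec e_u$ for every user $u$, and for every item $i$ and $z\in[Z]$ define $\vec E_{iz}\in\{0,1\}^U$ coordinatewise by $(\vec E_{iz})_u:=\delta_{z,R_{ui}}$ if $(u,i)\in\Gamma$ and $(\vec E_{iz})_u:=\delta_{z,1}$ if $(u,i)\notin\Gamma$ (the bin $1$ is an arbitrary default; any fixed $z_0\in[Z]$ works, with $z_0$ replacing $1$ below). Then $\vec p_u\in\Delta$, each $\vec E_{iz}$ is nonnegative, and $\sum_z(\vec E_{iz})_u=1$ for every $u$, so $(\vec E_{i1},\dots,\vec E_{iZ})\in\mathcal M_0\subseteq\mathcal M$ and $\bigl((\vec p_u)_u,(\vec E_{iz})_{iz}\bigr)$ is a (Kolmogorov, in particular normalized nonnegative) model of dimension $U$. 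It fits $R_\Gamma$ exactly, since $\vec E_{iz}^T\vec p_u=(\vec E_{iz})_u=\delta_{z,R_{ui}}$ whenever $(u,i)\in\Gamma$. To see it is pseudo sparse: $Z$ is a fixed constant, so $Z\sim\mathcal O(1)$ in $D=U$; and taking $z':=1$ for every item $i$, the support of $\vec E_{iz}$ for $z\ge2$ is contained in $\{u:(u,i)\in\Gamma\}$, whose cardinality is the number of times item $i$ was rated, which is $\mathcal O(1)$ in $U$ by hypothesis. Hence $\vec E_{i2},\dots,\vec E_{iZ}$ are sparse while $\vec E_{i1}$ is unconstrained, as the definition requires.

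\emph{Compression.} Apply Theorem~\ref{cor:fj345ewklnnr} to this $D=U$ dimensional pseudo-sparse NNM, with the same $J=U+IZ$. Its requirement $\tfrac{4}{\sqrt{JD}}\le\varepsilon\le\tfrac12$ reads $\tfrac{4}{\sqrt{JU}}\le\varepsilon\le\tfrac12$, which is precisely the hypothesis of Theorem~\ref{thm:fkerjtk4jk}, and its conclusion yields a quantum model $\bigl((\rho_u)_u,(E_{iz})_{iz}\bigr)$ of dimension $d=\mathcal O\bigl(\varepsilon^{-2}\ln(4JD)\bigr)=\mathcal O\bigl(\varepsilon^{-2}\ln(4JU)\bigr)$ with $\bigl|\vec p_u^T\vec E_{iz}-\tr(\rho_uE_{iz})\bigr|\le\mathcal O(\varepsilon)$ for all $u,i,z$. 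Substituting $\vec p_u^T\vec E_{iz}=\delta_{z,R_{ui}}$ for $(u,i)\in\Gamma$ gives $\bigl|\delta_{z,R_{ui}}-\tr(\rho_uE_{iz})\bigr|\le\mathcal O(\varepsilon)$ for all $(u,i)\in\Gamma$ and $z\in[Z]$, which is the assertion.

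\emph{Main obstacle.} Given Theorem~\ref{cor:fj345ewklnnr} there is essentially no obstacle; the only content is choosing the right exact interpolating model and recognizing that ``constantly many ratings per item'' is exactly the condition that makes it pseudo sparse. If one wanted a self-contained argument, the work would be to redo the Johnson--Lindenstrauss-type step inside Theorem~\ref{cor:fj345ewklnnr}: embed the $U$ unit vectors $\vec e_u$ (together with their pairwise sums or differences, so that norms are preserved too) by a random linear map $A:\AR^U\to\AR^d$ with $\langle A\vec e_u,A\vec e_v\rangle=\delta_{uv}\pm\mathcal O(\varepsilon)$, rescale the rank-one matrices $A\vec e_u\vec e_u^TA^T$ into trace-one psd matrices $\rho_u$, build $E_{iz}$ for $z\ge2$ out of those rank-one pieces with $u$ ranging over the $\mathcal O(1)$-size support of $\vec E_{iz}$, and set $E_{i1}:=\mathbb I-\sum_{z\ge2}E_{iz}$ (an appropriate normalization keeps $E_{i1}$ psd). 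Here the $\mathcal O(1)$ support size is what caps the accumulated embedding error in $\tr(\rho_uE_{iz})$ at $\mathcal O(\varepsilon)$ rather than letting it grow with $U$, and $Z=\mathcal O(1)$ propagates the bound to $E_{i1}$ through $\tr(\rho_uE_{i1})=1-\sum_{z\ge2}\tr(\rho_uE_{iz})$.
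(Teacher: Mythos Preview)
Your proof is correct and essentially identical to the paper's: you build the same exact $U$-dimensional interpolating model (with default bin $z'=1$ on unrated entries), verify pseudo sparsity via the $\mathcal O(1)$ ratings-per-item hypothesis, and compress. The only cosmetic difference is that you phrase the construction as an NNM and invoke Theorem~\ref{cor:fj345ewklnnr}, whereas the paper writes down its diagonal quantum embedding directly and invokes the underlying compression Theorem~\ref{Thm:fjewklnnr}; since Theorem~\ref{cor:fj345ewklnnr} is itself obtained from Theorem~\ref{Thm:fjewklnnr} via that embedding, the two routes coincide.
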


By Theorem~\ref{thm:fkerjtk4jk}, there exist low-dimensional quantum model fitting sparse datasets. On the other hand, Theorem~\ref{thm:fkerjtk4jk} states that it is easy to overfit sparse datasets with quantum models. These low-dimensional models have \emph{no predictive power} despite being low-dimensional. This is because we end up with said low-dimensional quantum model by compressing a NNM that had no predictive power in the first place.\footnote{It predicts that with probability 1, all the missing entries of $R$ are equal to 1 (which is an arbitrary choice).} Theorem~\ref{thm:fkerjtk4jk} is proven in the appendix.\footnote{To our knowledge it is an open problem to prove whether or not a result analogous to Theorem~\ref{thm:fkerjtk4jk} holds for NNMs.}

\section{Exponential dimension gap between quantum models and NNM}\label{sect:gap}

For some $\varepsilon \in [0,1/2)$, set
\beqa
	d_{\mathrm{NNM}} := 
	&\min		& d \nn \\
	&\mathrm{s.t.}	& \text{$\exists$ $d$-dimensional NNM ($\vec{p}_u,\vec{E}_{iz})$ s.t.} \nn \\
	&			& \text{$\bigl| \vec{p}_u^T\vec{E}_{iz} - \delta_{z,R_{ui}} \bigr| \leq 1-\varepsilon$ $\forall$ $ui \in \Gamma$} \label{fekjlk5j}
\eeqa
and
\beqa 
	d_{\mathrm{Q}} := 
	&\min		& d \nn \\
	&\mathrm{s.t.}	& \text{$\exists$ $d$-dimensional q-model ($\rho_u,E_{iz})$ s.t.} \nn \\
	&			& \text{$\bigl| \tr(\rho_u E_{iz}) - \delta_{z,R_{ui}} \bigr| \leq 1-\varepsilon$ $\forall$ $ui \in \Gamma$.} \label{fekjlk5jfe}
\eeqa
The following Theorem~\ref{thm:gap.thm} is proven in the appendix using seminal results about the one-way communication complexity of Boolean functions~\cite{Gavinsky,montanaro2011new}.

\begin{theorem}\label{thm:gap.thm}
	For all data $(R_{ui})_{ui \in \Gamma}$, $d_{\mathrm{Q}} \leq d_{\mathrm{NNM}}$. Moreover, there exists $(R_{ui})_{ui \in \Gamma}$ such that
	\[
		\log_2(d_{\mathrm{Q}}) =  \mathcal{O}(\log(n)), \ \ \log_2(d_{\mathrm{NNM}}) = \Theta(\sqrt{n}).
	\]
\end{theorem}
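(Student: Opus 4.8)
\emph{The inequality $d_{\mathrm Q}\le d_{\mathrm{NNM}}$.} This I would get essentially for free from the embedding~\eqref{fekj45k653}. A diagonal psd matrix is psd; the map $\vec p_u\mapsto\rho^{(0)}_u$ sends $\Delta$ into $\Delta'$, and $(\vec E_{i1},\vec E_{i2})\mapsto(E^{(0)}_{i1},E^{(0)}_{i2})$ sends $\mathcal M$ into $\mathcal M'$; and by construction the inner products are preserved exactly, $\vec p_u^T\vec E_{iz}=\tr(\rho^{(0)}_uE^{(0)}_{iz})$. Hence any NNM feasible for~\eqref{fekjlk5j} becomes a quantum model of the same dimension feasible for~\eqref{fekjlk5jfe}, so the minimum defining $d_{\mathrm Q}$ is no larger than the one defining $d_{\mathrm{NNM}}$.

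\emph{The separation: set-up.} The plan is to realize the dataset as the graph of a \emph{partial} Boolean function and then identify $d_{\mathrm Q}$ and $d_{\mathrm{NNM}}$ with one-way quantum, resp.\ classical, communication complexities. I would take the Boolean-Hidden-Matching family of~\cite{Gavinsky} (or the variant in~\cite{montanaro2011new}): a partial function $f_n\colon X_n\times Y_n\to\{0,1\}$ with $|X_n|,|Y_n|=2^{\mathrm{poly}(n)}$ whose bounded-error one-way quantum communication complexity is $\mathcal O(\log n)$ and whose bounded-error one-way randomized communication complexity is $\Omega(\sqrt n)$. Put $Z=2$; let users index $X_n$ and items index $Y_n$; let $(u,i)\in\Gamma$ exactly when $f_n$ is defined on $(x_u,y_i)$; and set $R_{ui}=1+f_n(x_u,y_i)\in\{1,2\}$. (I read the accuracy requirement in~\eqref{fekjlk5j}--\eqref{fekjlk5jfe} as: the predicted outcome is correct with probability a fixed constant above $1/2$, i.e.\ the deviation should be $\mathcal O(\varepsilon)$ with $\varepsilon$ small; this is what makes the two minima non-trivial, and any such constant advantage amplifies to $2/3$ by a constant number of parallel repetitions of the associated protocol, so its exact value is immaterial.)

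\emph{The separation: the two dictionaries.} Both are mechanical. For the upper bound on $d_{\mathrm Q}$: a one-way quantum protocol in which Alice sends a $q$-qubit message $\sigma_x$ and Bob applies a two-outcome POVM $\{M_y,\mathbb I-M_y\}$ is \emph{literally} a $2^q$-dimensional quantum model, via $\rho_u:=\sigma_{x_u}$, $E_{i2}:=M_{y_i}$, $E_{i1}:=\mathbb I-M_{y_i}$: then $\rho_u\in\Delta'$, $(E_{i1},E_{i2})\in\mathcal M'$, and $\tr(\rho_uE_{iz})$ equals Bob's probability of outputting ``$z-1$''. Feeding in the $\mathcal O(\log n)$-qubit protocol gives $d_{\mathrm Q}\le 2^{\mathcal O(\log n)}=\mathrm{poly}(n)$, hence $\log_2 d_{\mathrm Q}=\mathcal O(\log n)$. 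For the lower bound on $d_{\mathrm{NNM}}$: from a $d$-dimensional NNM feasible for~\eqref{fekjlk5j}, build the one-way protocol in which Alice, on input $x=x_u$, samples $j\in[d]$ with probability $(\vec p_u)_j$ and sends it ($\lceil\log_2 d\rceil$ bits), and Bob, on input $y=y_i$, outputs ``$z-1$'' with probability $(\vec E_{iz})_j$ --- a valid distribution because $\vec E_{i1}+\vec E_{i2}=\mathbf 1$ puts every entry in $[0,1]$. The output is ``$R_{ui}-1$'' with probability $\vec p_u^T\vec E_{i,R_{ui}}$, i.e.\ with a constant advantage by feasibility, so $\lceil\log_2 d\rceil\ge R^{1}(f_n)=\Omega(\sqrt n)$ and therefore $\log_2 d_{\mathrm{NNM}}=\Omega(\sqrt n)$. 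A matching $\mathcal O(\sqrt n)$ upper bound (up to logarithmic factors, which $\Theta$ in the statement absorbs) comes from running the same construction backwards on the elementary birthday-bound one-way classical protocol for $f_n$.

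\emph{Where the work is.} The communication-complexity separations are invoked as black boxes, so the substance of the proof is the dictionary itself: checking that the normalization constraints ($\vec p_u\in\Delta$; $\sum_zE_{iz}=\mathbb I$; $\tr\rho_u=1$) correspond precisely enough to the rules of one-way protocols with a bounded-length message that \emph{both} directions of the translation go through without loss; pinning down the error regime so that the bounded-error lower bound of~\cite{Gavinsky} actually applies (the only place the reading of $\varepsilon$ and the amplification step matter); and making clear that a \emph{partial} function --- equivalently a genuinely sparse $\Gamma$ --- is indispensable, since for total Boolean functions no exponential one-way quantum-versus-classical gap exists, and it is exactly the unconstrained entries $R_{ui}$ with $(u,i)\notin\Gamma$ that make the hard instance possible.
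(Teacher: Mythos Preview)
Your plan is essentially the paper's: the trivial diagonal embedding for $d_{\mathrm Q}\le d_{\mathrm{NNM}}$, then a dictionary between one-way protocols and models, invoking the Gavinsky et al.\ separation for the $\alpha$-partial matching problem as a black box. Your NNM$\to$protocol translation is actually cleaner than the paper's---you let Bob flip a $(\vec E_{iz})_j$-biased coin directly, whereas the paper first decomposes $\vec E_y$ via Carath\'eodory into a convex combination of $\{0,1\}^d$ vectors and has Bob's private randomness select one of those; both yield the same bound. One small slip: $\Theta(\cdot)$ does \emph{not} absorb logarithmic factors, but this is harmless here since the classical complexity of the partial-matching problem is already $\Theta(\sqrt{n})$ tight, so the protocol$\to$NNM direction of the dictionary (your ``running the construction backwards'') gives the matching upper bound without any logs.
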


By Theorem~\ref{thm:gap.thm} there exist partially known rating matrices $R_{\Gamma}$ for which $d_{\mathrm{NNM}}$ is exponential in $d_{\mathrm{Q}}$. Hence, there exist data tables that cannot be fitted by NNMs (for computational reasons) but can in principle be fitted by quantum models. In the appendix we prove Theorem~\ref{thm:gap.thm} through a relation between the dimension of NNMs (quantum models) and bounded error (quantum) one-way communication complexity of Boolean functions.

\section{Computation of NNMs through quantum models}\label{Conservation.of.optimal.solutions.under.relaxation}

Imagine we want to find a NNM satisfying $\mathbb{P}_u[\hat{E}_i = z] \approx \vec{p}_u^T \vec{E}_{iz}$. For this purpose we could run the natural analog of Algorithm~\ref{Alg:Constrained.least.squares.for.NPSD} where the SDP-constraint is replaced by a $\AR^D_+$-constraint. However, here we suggest to consider running Algorithm~\ref{Alg:Constrained.least.squares.for.NPSD} for the computation of NNMs because when passing from NNMs to quantum models we enlarge the parameter space that is available to heuristic algorithms like alternating optimization. In general, this is likely to be beneficial whenever \emph{minimizers of~\eqref{fejw4h5jh} can be extracted from minimizers of~\eqref{fewk43gssae5hdwdj}}. Here, in terms of the analysis of the following toy-example (Lemma~\ref{lem:fwerg435e}), we show that this can be the case. Enlarging feasible sets in that manner is bread-and-butter in non-convex optimization~\cite{gouveia2013lifts} where we try to `lift' difficult optimization problems to end up with easy optimization problems (ideally convex). 

\begin{lemma}\label{lem:fwerg435e}
	For each dimension $D$ there exist NNMs $\bigl( (\vec{p}_u)_u, (\vec{E}_{iz})_{iz} \bigr)$ with the following property. For every $D$-dimensional quantum model $\bigl( (\rho_u)_u, (E_{iz})_iz \bigr)$ satisfying $\vec{p}_u^T \vec{E}_{iz} = \tr(\rho_u E_{iz})$ (for all $u,i,z$) there exists a unitary matrix $U$ such that
	\[
		U\vec{p}_u \vec{p}_u^T U^* = \rho_u, \ \ \  U\vec{E}_{iz} \vec{E}_{iz}^T U^*
	\] 
	for all $u,i,z$. The unitary $U$ can be computed efficiently.
\end{lemma}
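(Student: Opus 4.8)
The plan is to prove the Lemma by exhibiting, for each $D$, one concrete NNM that is \emph{rigid} in the required sense, and then running a self-testing style argument on any quantum realization. Concretely, take the sample space $\Omega = \{\omega_1,\dots,\omega_D\}$, $U = D$ users with $\vec p_u = \vec e_u$, and a single item with $Z = D$ whose random variable is the identity map $\hat E_1(\omega_j)=j$, so that $\vec E_{1z} = \vec e_z$ and the data is $\vec p_u^T\vec E_{1z} = \delta_{uz}$ (one may append further items; see the end). I claim every $D$-dimensional quantum model reproducing this data is unitarily equivalent to the diagonal one, and this gives the Lemma.

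The first step is to show the states are forced to be rank one and mutually orthogonal. For $z\neq u$ we have $\tr(\rho_u E_{1z}) = 0$ with $\rho_u, E_{1z}$ psd, which forces $\operatorname{supp}(\rho_u)\subseteq\ker E_{1z}$; hence $\operatorname{supp}(\rho_u)\subseteq W_u := \bigcap_{z\neq u}\ker E_{1z}$. Since $\sum_z E_{1z} = \mathbb I$, on $W_u$ the operator $E_{1u}$ restricts to the identity, so $W_u$ lies in the eigenvalue-$1$ eigenspace of $E_{1u}$ while each $W_{u'}$ with $u'\neq u$ lies in $\ker E_{1u}$; orthogonality of eigenspaces of the Hermitian $E_{1u}$ for the distinct eigenvalues $1$ and $0$ gives $W_u\perp W_{u'}$. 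Thus $W_1,\dots,W_D$ are $D$ pairwise orthogonal subspaces of $\mathbb C^D$, each nonzero because $\operatorname{supp}(\rho_u)\neq\{0\}$, so each is a line; consequently $\operatorname{supp}(\rho_u)=W_u$ and $\rho_u = v_u v_u^*$ for a unit vector $v_u$ spanning $W_u$, with $\{v_u\}_u$ an orthonormal basis. Moreover $E_{1z}v_u = 0$ for $z\neq u$ together with $\sum_z E_{1z}v_u = v_u$ yields $E_{1u}v_u = v_u$, hence $E_{1z}v_u = \delta_{zu}v_u$ for all $z,u$, i.e. $E_{1z}=v_z v_z^*$.

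It then remains to package this as a unitary and check efficiency. Set $U := [\,v_1 \mid \cdots \mid v_D\,]$, which is unitary since $\{v_u\}$ is an orthonormal basis; then $U\vec e_u = v_u$, so $U\vec p_u\vec p_u^T U^* = v_u v_u^* = \rho_u$ and $U\vec E_{1z}\vec E_{1z}^T U^* = v_z v_z^* = E_{1z}$, which is the claimed identity. Efficiency is immediate: $U$ is read off from unit eigenvectors of the rank-one matrices $\rho_u$, the overall phase of each $v_u$ being irrelevant because only the projectors $v_u v_u^*$ appear. If one also includes items $i$ with arbitrary $(\vec E_{iz})_z\in\mathcal M_0$, the same $U$ works: in the basis $\{v_u\}$ each $E_{iz}$ is psd with $E_{iz}\preceq\sum_z E_{iz}=\mathbb I$ and diagonal equal to the $0/1$ vector $\vec E_{iz}$, so the zero diagonal entries kill the corresponding rows and columns, while the principal block on the support of $\vec E_{iz}$ is psd, $\preceq\mathbb I$, with all-ones diagonal and therefore equals the identity; hence $E_{iz}=U\,\mathrm{diag}(\vec E_{iz})\,U^*$.

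The main obstacle is conceptual rather than computational: one must recognize that the statement concerns a \emph{well-chosen} NNM, not all NNMs (generic psd factorizations of minimal size are not unique up to unitary), and pick one — the identity-measurement NNM above — for which rigidity provably holds; once that is in place the rest is the routine ``zero off-diagonal via psd annihilation'' argument. A secondary point to handle with care is that the conclusion must pin down the item operators and not merely the states, and this is exactly where the POVM normalization $\sum_z E_{iz}=\mathbb I$ is used.
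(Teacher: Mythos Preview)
Your argument is correct and proves the Lemma. The construction (point masses $\vec p_u=\vec e_u$ and a single identity measurement $\vec E_{1z}=\vec e_z$ with $Z=D$) is in fact the minimal instance of the class of NNMs the paper uses, so the overall strategy---force a table of $0$/$1$ probabilities and exploit the resulting rigidity---is the same. The technical route, however, differs: you work from the zero constraints $\tr(\rho_u E_{1z})=0$, use the standard ``psd matrices with zero trace pairing have orthogonal supports'' fact to confine $\operatorname{supp}(\rho_u)$ to $W_u=\bigcap_{z\neq u}\ker E_{1z}$, and then count dimensions; the paper instead works from the unit constraints $\tr(\rho_{u_{iz}}E_{iz})=1$, combines Cauchy--Schwarz with a Frobenius-norm identity $\|\sum_z E_{iz}\|_F^2=Z$ to force $\|E_{iz}\|_F=1$, and reads off equality in Cauchy--Schwarz to get $E_{iz}=\rho_{u_{iz}}$ directly. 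Your approach is a bit more elementary and makes the orthogonality transparent; the paper's approach is tailored to a broader family with several items $i\in[I]$ and devotes an extra step to showing that the unitaries $U_i$ obtained item-by-item in fact coincide. Your extension paragraph (fixing the basis from the first item and then arguing that any further $\mathcal M_0$-item must be diagonal in that basis via the ``psd, $\preceq\mathbb I$, diagonal in $\{0,1\}$'' trick) achieves the same end by a different, and arguably cleaner, mechanism.
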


Lemma~\ref{lem:fwerg435e} shows that there exist situations where an underlying NNM can be computed by solving the relaxation~\eqref{fewk43gssae5hdwdj} of~\eqref{fejw4h5jh}. We think that the class of toy-examples provided in the proof of Lemma~\ref{lem:fwerg435e} successfully captures the nature of situations where the available data about the users' ratings is sufficient to push the user and item representations towards the boundary of $\AR^D_+$ and $S^+(\mathbb{C}^D)$ (so that uniqueness of any NNM and quantum model is enforced). We provide the proof of Lemma~\ref{lem:fwerg435e} in the appendix.

\section{Extraction of hierarchical orderings}\label{sect:extraction.of.hierarchy}

Here, we describe how quantum models can be used to order properties of items (or users) in a hierarchical manner; see for example figure~\ref{ML1M_hierarchy_graph}. 

\subsection{Description of tags}

In practice, we often have not only access to entries of the user-item matrix $R$ but we also have access to side information specifying properties of users and items. For instance, in movie recommendation, the MovieLens 1M dataset specifies the genre of each of the movies in the dataset. For example, \emph{Jurassic Park} $\in \{\text{Action, Adventure, Sci-Fi}\}$ and \emph{Anchorman} $\in \{\text{Comedy}\}$. Let $\{ i_1^t, ..., i^t_{m_t} \}$ be the set of items that have been tagged with a particular tag $\tau_t$ from the set $\{ \tau_t \}_{t \in [T]}$ of all tags. Is there a way to represent tags (e.g., genre tags) mathematically? 

To answer this question, we suggest to consider the following game which was introduced in~\cite{stark2015expressive}. It involves a referee and a player named Alice. The game proceeds as follows.
\begin{enumerate}
\item The referee chooses a tag $\tau_t$ and a user $u$.
\item Alice is given access to $\rho_u$, to $\tau_t$, to $\{ i_1^t, ..., i^t_{m_t} \}$ and to all item matrices $\vec{E}_{iz}$ ($Z = 2$; $z=1$ means `like', $z=2$ means `dislike').
\item The referee chooses uniformly at random an item $i^*$ from the set $\{ i_1^t, ..., i^t_{m_t} \}$ of all items that were tagged with $\tau_t$.
\item Querying $R$, the referee checks whether user $u$ likes or dislikes $i^*$. We denote $u$'s opinion by $z^* \in \{ \text{like,dislike} \}$.
\item Alice guesses $z^*$. She wins the game if she guesses correctly and loses otherwise.
\end{enumerate}

What is Alice's winning probability? We denote by $\mathbb{P}[ z^* = 1 | i ]$ the probability for the event $\{ z^* = 1 \}$ conditioned on the event \emph{Referee draws $i$}. It follows that $\mathbb{P}[ z^* = 1 | i ] = \tr(\rho_u E_{i1})$. The probability for the event \emph{Referee draws $i$} is $1/m_t$ because in total there are $m_t$ items tagged with $\tau_t$. Thus,
\[
	\mathbb{P}[ z^* = 1] = \sum_{i \in \{ i_1^t, ..., i^t_{m_t} \}} \mathbb{P}[ z^* = 1 | i ] \mathbb{P}[i] =  \tr(\rho_u E_{t})
\]
with
\beq\label{fwef454}
	E_t := \frac{1}{m_t} \sum_{i \in \{ i_1^t, ..., i^t_{m_t} \}} E_{i1}.
\eeq
$E_t$ serves as useful characterization of the item property $\tau_t$ because \emph{$E_t$ determines for each user $u$ the probability that $u$ likes a random item with property $\tau_t$}. Similarly, in NNMs, we can describe tags through tag-vectors $\vec{E}_t := \frac{1}{m_t} \sum_{i \in \{ i_1^t, ..., i^t_{m_t} \}} \vec{E}_{i1}$.

\subsection{Hierarchical ordering}\label{sect:hierarchical.ordering}

Let $\vec{p}_u, \vec{E}_{iz}$ be a NNM for some users and items, and let $\{ \vec{E}_t \}_{t \in [T]}$ denote tag-vectors. If the item vectors $\vec{E}_{iz}$ were indicator vectors as in~\eqref{fej435hjhj}, then it would be natural to define an ordering `$\subseteq$' on the set of items by
\beq\label{fwefwt3464gregr}
	\tau_t \subseteq \tau_{t'} \; :\Leftrightarrow  \; \mathrm{support}(\vec{E}_t) \subseteq \mathrm{support}(\vec{E}_{t'}).
\eeq
That is because $\text{$\vec{p}_u^T\vec{E}_t \approx 1$ implies $\vec{p}_u^T\vec{E}_{t'} \approx 1$}$ whenever $\mathrm{support}(\vec{E}_t) \subseteq \mathrm{support}(\vec{E}_{t'})$. We can rewrite~\eqref{fwefwt3464gregr} in terms of
\beq\label{fwefwt3464gre}
	\tau_t \subseteq \tau_{t'} \; \Leftrightarrow  \; \vec{E}_{t'}^T \vec{E}_{t} = \| \vec{E}_{t} \|_1.
\eeq
In NNMs, tag-vectors are usually not exactly binary and consequently,~\eqref{fwefwt3464gre} cannot be satisfied for any pair of tags. However, from a practical perspective, it makes still sense to ask whether most of the weight of $\vec{E}_t$ is contained in most of the weight of $\vec{E}_{t'}$. This motivates the relaxation~\cite{stark2015expressive}
\beq\label{few64ah}
	\tau_t \subseteq_{\varepsilon} \tau_{t'} \; :\Leftrightarrow  \; \vec{E}_{t'}^T \vec{E}_{t} \geq  (1 - \varepsilon) \| \vec{E}_{t} \|_1
\eeq
of~\eqref{fwefwt3464gre}. This discussion that led to~\eqref{few64ah} can be `quantized': after the embedding~\eqref{fekj45k653} of NNMs into quantum models, \eqref{few64ah} reads
\beq\label{fekjh6jmneff}
	\tau_t \subseteq_{\varepsilon} \tau_{t'} \; :\Leftrightarrow  \; \tr(E_{t} E_{t'}) \geq  (1 - \varepsilon) \| \vec{E}_{t} \|_1.
\eeq
We suggest presenting the collection of all pairwise relationships $\tau_t \subseteq_{\varepsilon} \tau_{t'}$ in terms of a directed graph $G = (V,E)$ where each vertex corresponds to a tag and where $t \rightarrow t'$ if and only if $\tau_t \subseteq_{\varepsilon} \tau_{t'}$. Visualizations of that type are \emph{important in practice} because it is often difficult to succinctly summarize datasets.

Of course, we could capture ``$\text{$\vec{p}_u^T\vec{E}_t \approx 1$ implies $\vec{p}_u^T\vec{E}_{t'} \approx 1$}$" more precisely through the following more accurate but computationally more expensive definition of $\tau_t \subseteq_{\varepsilon} \tau_{t'}$.\footnote{Computational considerations become relevant in the extreme multi-label limit~\cite{hsu2009multi,prabhu2014fastxml,zhang2011multi,ji2008extracting,agrawal2013multi}. In this multi-label limit, computational considerations are relevant because checking all the pairwise relations $\tau_t \subseteq_{\varepsilon} \tau_{t'}$ scales quadratically in the number of tags/labels. On the other hand, running a fixed number of iterations of alternating optimization to infer quantum models scales linearly in the number of users and items.} We check whether the semidefinite program
\beq\label{fewl435jkfgnr}
	\mathrm{find} \text{ $\rho$ s.t. $\tr(\rho E_t) \geq 1-\varepsilon/2$, $| \tr(\rho(E_{t'}-E_t)) | > \varepsilon/2$.}
\eeq
has a solution. If~\eqref{fewl435jkfgnr} has no solution and if 
\beq\label{fj43h5jhsd}
	\| E_t \|  =   \max\{\mathrm{spectrum}(E_t)\} \geq 1-\varepsilon/2
\eeq
then we declare membership $\tau_t \subseteq_{\varepsilon} \tau_{t'}$. That is because in this case we have that
\beq\begin{split}
	&\tr(\rho E_t) \geq 1-\varepsilon/2 \ \Rightarrow \ \\ 
	&\ \ \ \ \tr(\rho E_{t'}) = \tr(\rho (E_{t'}-E_t))+\tr(\rho E_t) \geq (-\varepsilon/2) +\tr(\rho E_t) \geq 1-\varepsilon.
\end{split}\eeq
Note that the condition `$\tr(\rho E_t) \geq 1-\varepsilon/2$' is important. If we did not have this constraint then we could almost always find $\rho$ with $| \tr(\rho(E_t-E_{t'})) | > \varepsilon/2$ (in the case of NNMs, simply choose $\vec{p}$ to be supported in supp($E_{t'}$) minus supp($E_t$)). We require~\eqref{fj43h5jhsd} because without this condition, we would have $\tau_t \subseteq_{\varepsilon} \tau_{t'}$ for all $t'$ whenever $E_t \approx 0$.

\section{Experiments}\label{sect:numerical.eval}

In this section we report our findings about the practical performance of quantum models in the context of recommendation. In the appendix we specify the configuration of all the algorithms used in the numerical experiments. The performance of previously known recommender systems was evaluated using the java library LibRec.\footnote{http://www.librec.net/} We evaluate the performance of quantum models in terms of \emph{mean-average-error} (MAE), \emph{Root-mean-squared-error} (RMSE) and \emph{recall}. The appendix of~\cite{stark2015expressive} contains a reminder of the definitions of MAE, RMSE and recall. Moreover, the appendix of~\cite{stark2015expressive} includes an argument for favoring MAE over RMSE when judging the performance of recommender systems. All results were computed on a desktop computer (4 cores; 16GB RAM) running Matlab calling SDPT3~\cite{toh1999sdpt3}. We evaluate the performance of quantum models on the MovieLens 100K and 1M dataset~\cite{miller2003movielens} so that we can compare our results with prior works~\cite{cremonesi2010performance}.

\emph{Top-N recommendation.} We select the training data $\Gamma_{\mathrm{train}}$ and test data $\Gamma_{\mathrm{test}}$ from the MovieLens 1M dataset as in~\cite{cremonesi2010performance}. This amounts to randomly choosing 1.4\% of the overall ratings as test set. As in~\cite{cremonesi2010performance}, $i$ is considered to be \emph{relevant} for user $u$ if $R_{ui} = 5$.

Figure~\ref{fig:recall.as.fn.of.iteration.CQ.allItems} compares quantum models with widely used recommender systems (see~\cite{cremonesi2010performance} for details; NNM60 refers to a 60-dimensional NNM~\cite{stark2015expressive}). We note that quantum models \emph{perform very well for small $N$} and there is a gap between NNMs and quantum models. This is of interest in applications because we do not want to present long lists of recommendations to users. Figure~\ref{fig:recall.as.fn.of.dim.Complex.Quant.and.fig:recall.as.fn.of.iteration.NNM} displays recall at 20 as function of the iteration of Algorithm~\ref{Alg:Constrained.least.squares.for.NPSD}. 

\begin{figure}[tbp]
\centering
\includegraphics[width=1\columnwidth]{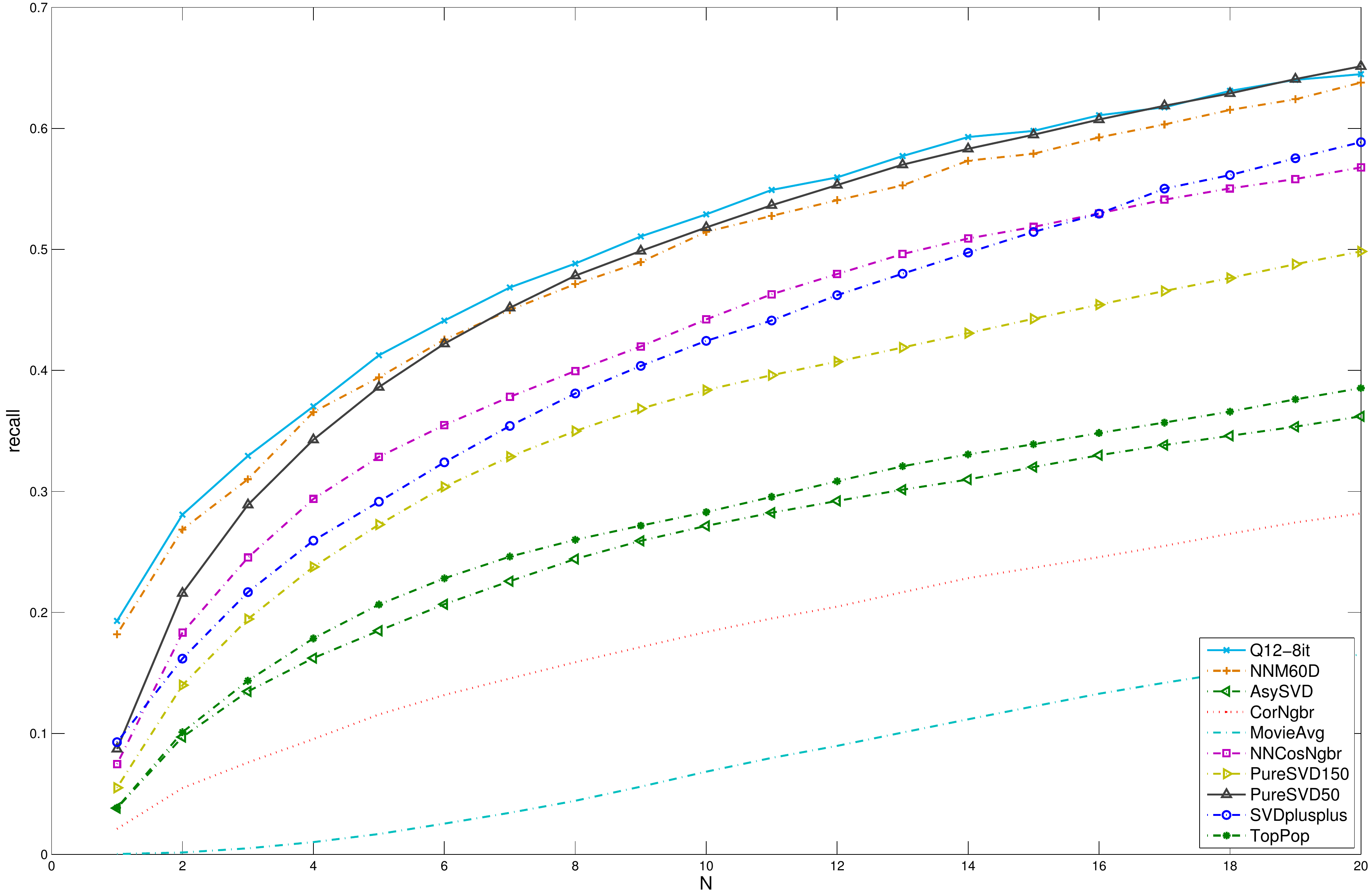}
\caption{Recall at $N$ (all items). Q12-8it is a 12-dimensional quantum model computed with 8 iterations of Algorithm~\ref{Alg:Constrained.least.squares.for.NPSD}.}
\label{fig:recall.as.fn.of.iteration.CQ.allItems}
\end{figure}

\emph{MAE and RMSE.} Table~\ref{Table.with.numerical.results} summarizes empirical results which were obtained via 5-fold crossvalidation (0.8 to 0.2 splitting of the data). We note that quantum models perform well in MAE, matching the performance of state-of-the-art recommender systems. Indeed quantum models outperform SVD++ on the MovieLens datasets. Here, quantum models, NNMs and SVD++ are optimized for small RMSE. Figure~\ref{fig:recall.as.fn.of.iteration.Complex.Quant} displays MAE as function of the iteration of Algorithm~\ref{Alg:Constrained.least.squares.for.NPSD}. We observe that the performance of quantum models (measured in MAE, RMSE) matches the performance of NNMs (at least as long as we use simple alternating optimization for the inference of quantum models).

\begin{figure}
\centering
\begin{minipage}[t]{.5\textwidth}
\centering
\vspace{0pt}
\includegraphics[width=\textwidth]{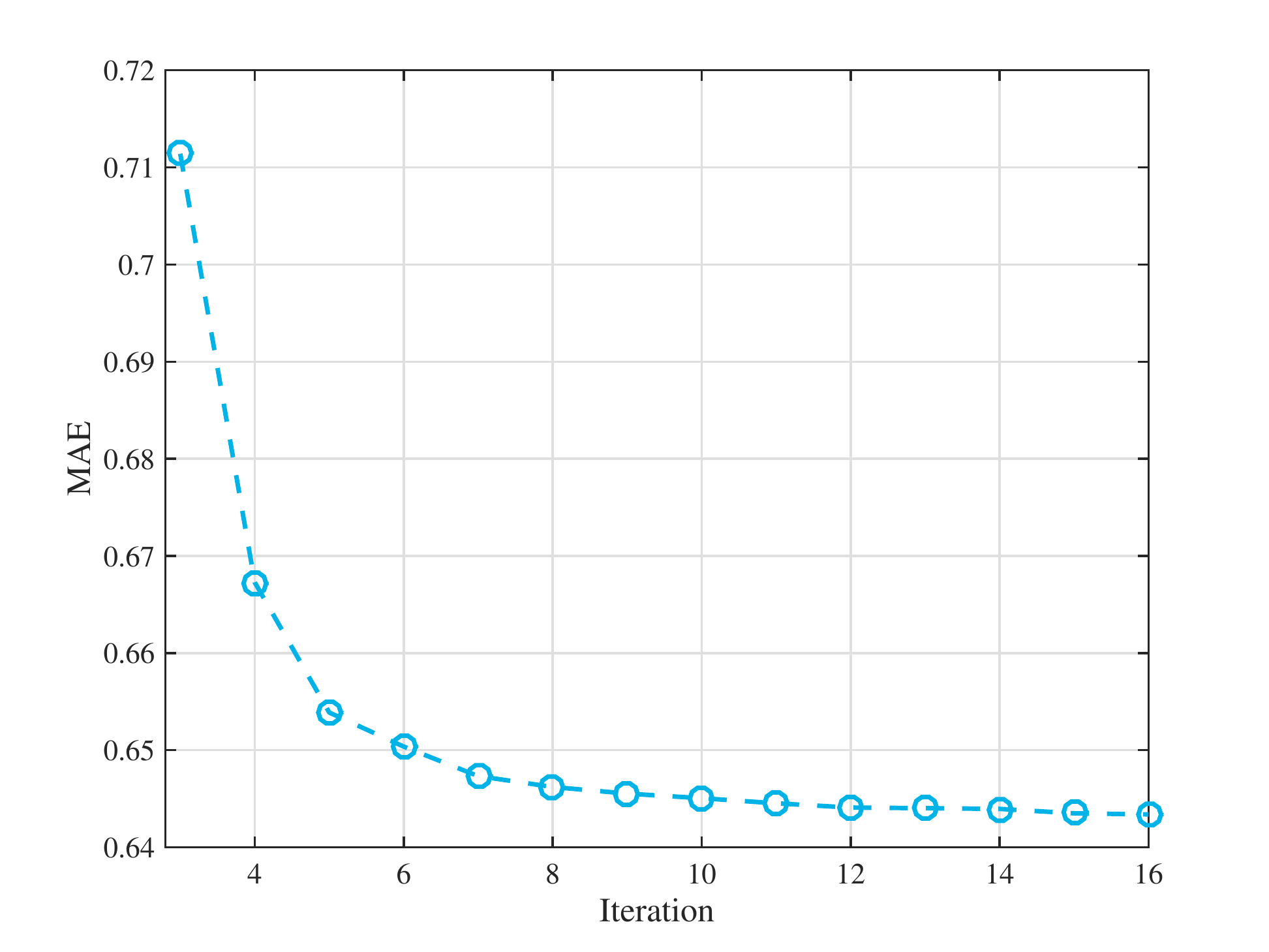}
\caption{MAE as a function of the iteration of Algorithm~\ref{Alg:Constrained.least.squares.for.NPSD} for $D = 3$ (MovieLens 1M)}
\label{fig:recall.as.fn.of.iteration.Complex.Quant}
\end{minipage}\hfill
\begin{minipage}[t]{.5\textwidth}
\centering
\vspace{0.6cm}
\captionof{table}{MAE and RMSE for the MovieLens 100K and 1M datasets.}
\label{Table.with.numerical.results}
\vspace{0.15cm}
\begin{tabular}{|l|c|c|c|c|}
\hline
								& \emph{100K},	& \emph{100K},	& \emph{1M},		& \emph{1M}, \\ 
								& MAE	& RMSE	& MAE	& RMSE	 \\ 
\hline
UserKNN~\cite{resnick1994grouplens}	& 0.74	& 0.94				& 0.70		& 0.91		 		\\
ItemKNN~\cite{rendle2009bpr}			& 0.72	& 0.92	& 0.69		& 0.88		 	 		\\
NMF~\cite{lee2001algorithms}			& 0.75	& 0.96		& 0.73		& 0.92		 		\\
SVD++~\cite{koren2008factorization}	& 0.72	& {\bf 0.91}	& 0.67		& {\bf 0.85}						\\
NNM	~\cite{stark2015expressive}		& {\bf 0.70}	& 0.98	& {\bf0.64} 	& 0.92	\\
Quantum							& {\bf 0.70}	& 0.99	& {\bf 0.64}		& 0.92 \\
\hline
\end{tabular}
\end{minipage}
\end{figure}

\begin{figure}
\centering
\begin{subfigure}{.5\textwidth}
  \centering
  \includegraphics[width=1\linewidth]{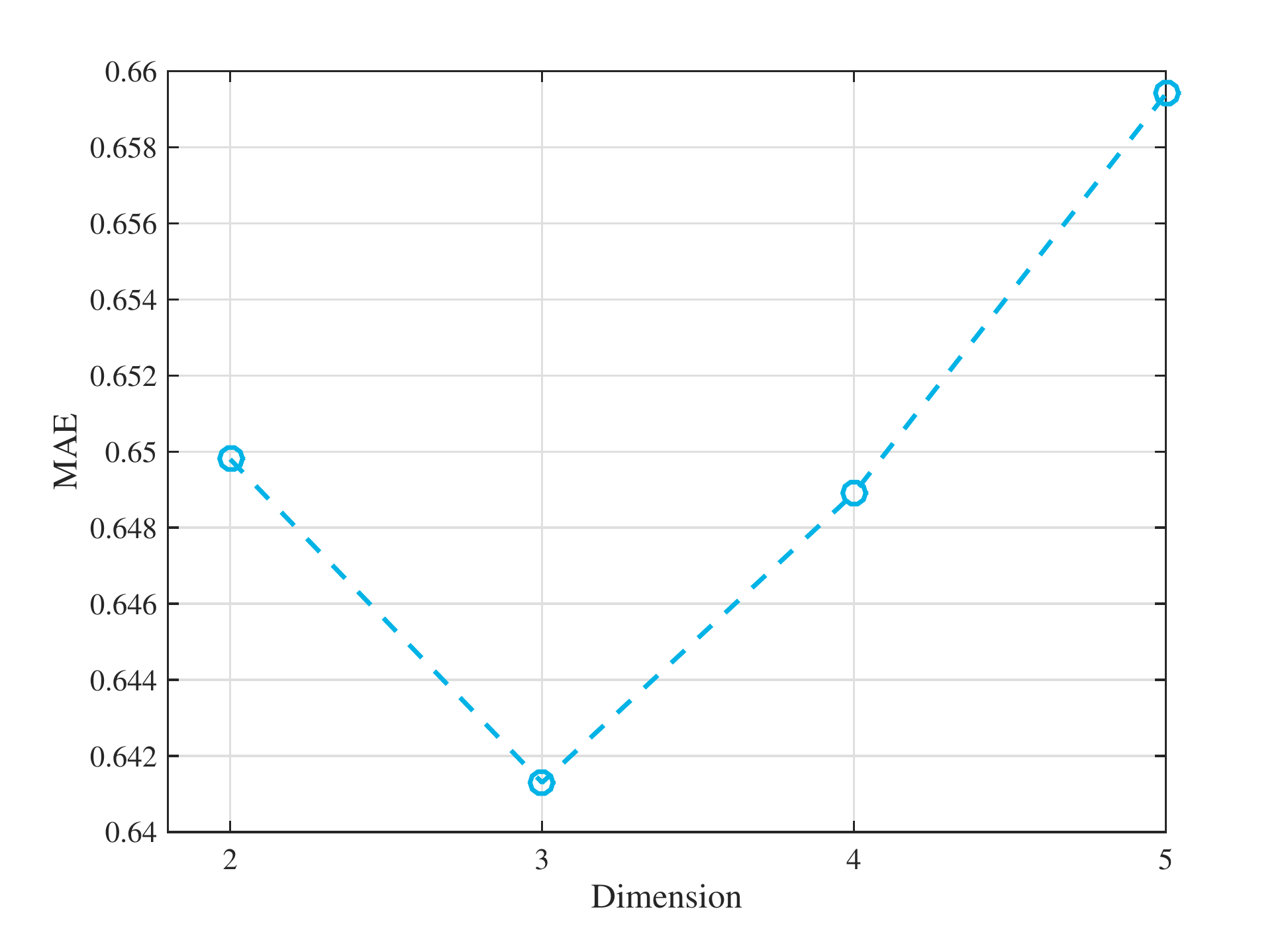}
\end{subfigure}%
\begin{subfigure}{.5\textwidth}
  \centering
  \includegraphics[width=1\linewidth]{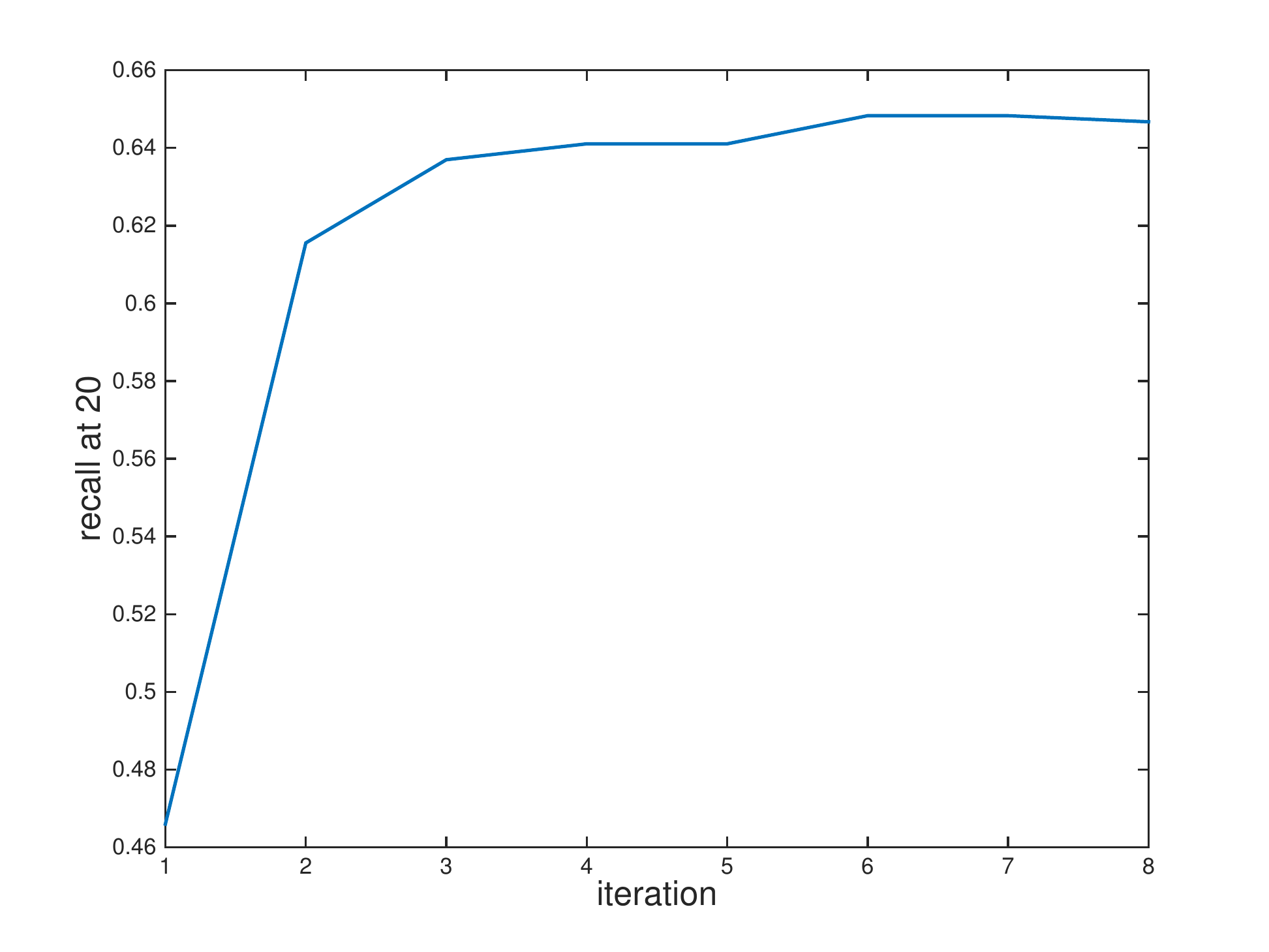}
\end{subfigure}
\caption{\emph{Left:} MAE as a function of the dimension $D$ (MovieLens 1M). \emph{Right:} Recall at 20 (all items) as function of iteration; $D = 12$ (MovieLens 1M).}
\label{fig:recall.as.fn.of.dim.Complex.Quant.and.fig:recall.as.fn.of.iteration.NNM}
\end{figure}

\emph{Hierarchical orderings.} In section~\ref{sect:hierarchical.ordering} we introduced a method to extract hierarchical orderings of properties of users and items. Here, we apply this method~\eqref{fekjh6jmneff} to quantum models computed for the MovieLens 1M dataset. The result is the directed graph in figure~\ref{ML1M_hierarchy_graph}. The vertices of the property graph correspond to movie genres like \emph{Action}, \emph{Comedy}, \emph{Drama}. To reduce the complexity of figure~\ref{ML1M_hierarchy_graph}, we excluded the vertices corresponding to the generes `Film-Noir' and `War' from the figure. Movies associated to these genres are rated highly by a majority of users and thus, all genres are connected to those genres: because of the selection bias, said genres must lie in the intersection of every other genre.

\begin{figure*}
\centering
\includegraphics[width=0.9\columnwidth]{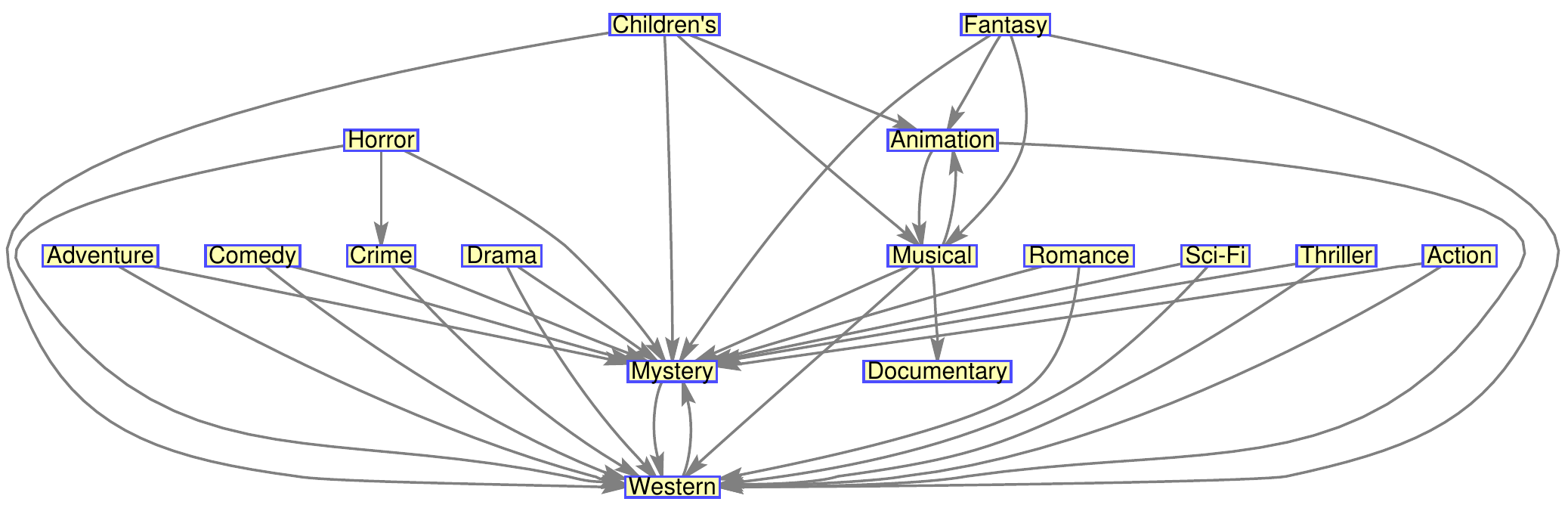}
\caption{Genre hierarchy at $\varepsilon = 1/3$ and $D = 3$. This is a hierarchy extracted from a quantum model fitting the MovieLens 1M dataset (cf.~\cite{stark2015expressive} for the comparison with NNMs).}
\label{ML1M_hierarchy_graph}
\end{figure*}

\section{Conclusion}\label{sect:conclusion}

Quantum models have been so successful in physics because of their \emph{interpretability} (allowing for \emph{conclusions} from user and item representations), their \emph{relation to probabilistic descriptions} and because of their \emph{high predictive power}. Quantum models inherit their interpretability from Kolmogorov factorizations (section~\ref{Sect:Normalized.nonnegative.models}). Here, we made use of this connection by adapting a method from~\cite{stark2015expressive} for the computation of hierarchical orderings of tags of users and items. Their relation to Kolmogorov factorizations makes quantum models more expressive than general matrix factorizations which are difficult to interpret; see section~3 of~\cite{stark2015top}.

On the other hand, as illustrated in figure~\ref{Fig:dim.interpretability.tradeoff}, high interpretability comes at the price of higher dimensionality of representations (e.g., user and item representations). Theorem~\ref{thm:gap.thm} makes this more precise: by Theorem~\ref{thm:gap.thm}, there exist datasets with the property that the dimension of the optimal approximate NNMs is exponential in the dimension of the optimal approximate quantum model. Consequently, there exist datasets for which we can compute approximate quantum models but for which we cannot compute approximate NNMs. Using a lower bound from~\cite{stark2014compressibility} and~\cite{lee2014some}, the paper~\cite{stark2014compressibility} showed an similar exponential separation between quantum models and general matrix factorizations (apply said lower bound to the example $E_{1z} = \vec{e}_z \vec{e}_z^T$, $Z=D$, from the introduction in~\cite{stark2014compressibility}).

\begin{figure}
\centering
\tikzstyle{mybox} = [draw=black!70, fill=gray!10, very thick,
    rectangle, rounded corners, inner sep=10pt, inner ysep=20pt]
\tikzstyle{fancytitle} =[fill=black!70, text=white]

\begin{tikzpicture}
       \node (a) at (0,-0.20)
         {
            	\begin{tikzpicture}
			\node [mybox] (box){%
    			\begin{minipage}{0.170\textwidth}
        				
				\vspace{-2em}
        				\begin{tikzpicture}[scale=1.0]
					\draw[->] (0,0) -- (2,0);
					\draw[->] (0,0) -- (0,2);
					\node at (-0.24,2.2) {$\mathbb{R}^d$};		
					\draw[dashed] (1.4,0) -- (0,1.4);		
					\draw[redish,thick,->] (0,0) -- (0.42,0.98);
					\node[redish] at (0.72,1.24) {$\vec{p}_{u}$};
					\draw[redish,thick,->] (0,0) -- (1.4,0.0);
					\node[redish] at (1.5,-0.34) {$\vec{E}_{iz}$};
				\end{tikzpicture}
				\vspace{-4em}				
        
    			\end{minipage}
			};
			\node[fancytitle, right=10pt] at (box.north west) {Kolmogorov};
		\end{tikzpicture}
         };
        \node (b) at (a.east) [anchor=west,yshift=0cm,xshift=0cm]
         {
            	\begin{tikzpicture}
			\node [mybox] (box){%
    			\begin{minipage}{0.170\textwidth}
        				
				\vspace{-2em}
        				\begin{tikzpicture}[scale=0.995]
					\draw[->] (0,0) -- (2,0);
					\draw[->] (0,0) -- (0,2);
					\node at (-0.8,2.2) {$\mathbb{R}^d$};		
					\draw[dashed] (1.4,0) -- (0,1.4);		
					\draw[redish,thick,->] (0,0) -- (0.42,0.98);
					\node[redish] at (0.16,1.24) {$\vec{p}_{u}$};
					\draw[redish,thick,->] (0,0) -- (1.3,0.8);
					\node[redish] at (1.05,0.8) {$\vec{E}_{iz}$};
					\node at (0,2.1) {};					
					\node at (0,-0.55) {};
				\end{tikzpicture}
        				\vspace{-4em}
    			\end{minipage}
			};
			\node[fancytitle, right=10pt] at (box.north west) {NNM};
		\end{tikzpicture}
         };
         \node (c) at (b.east) [anchor=west,yshift=0cm,xshift=0cm]
         {
            	\begin{tikzpicture}
			\node [mybox] (box){%
    			\begin{minipage}{0.170\textwidth}
        				
				\vspace{-1.9em}
        				\begin{tikzpicture}[scale=0.93]
					\draw[fill=gray!30,rotate around={-45:(1.1,1.1)}] (1.1,1.1) ellipse (0.9cm and 0.5cm);
					\draw[] (0,0) -- (2.325,0.555);
					\draw[] (0,0) -- (0.57,2.355);	
	
					\draw[redish,thick,->] (0,0) -- (1.5,0.65);
					\node[redish] at (1.05,0.9) {$\rho_{u}$};
					\draw[redish,thick,->] (0,0) -- (1.55,1.8);
					\node[redish] at (1.25,1.8) {$E_{iz}$};
					
					\node at (-0.2,2.6) {$S^+(\mathbb{C}^d)$};				
				\end{tikzpicture}
        				\vspace{-0.15em}
    			\end{minipage}
			};
			\node[fancytitle, right=10pt] at (box.north west) {quant model};
		\end{tikzpicture}
         };
         \node (d) at (c.east) [anchor=west,yshift=0cm,xshift=0cm]
         {
            	\begin{tikzpicture}
			\node [mybox] (box){%
    			\begin{minipage}{0.170\textwidth}
        				
				\vspace{-2.9em}
        				\begin{tikzpicture}[scale=1.22]
					\draw[->] (-1,0) -- (1,0);
					\draw[->] (0,-1) -- (0,1);
					\node at (-0.7,1.1) {$\mathbb{R}^d$};
	
					\draw[redish,->,thick] (0,0) -- (0.8,0.2);	
					\draw[redish,->,thick] (0,0) -- (0.6,-0.8);		
					\node[redish] at (0.56,0.35) {$\vec{v}_u$};	
					\node[redish] at (0.4,-0.8) {$\vec{r}_i$};
					
					\node at (0,1.46) {};
					\node at (0,-0.54) {};	
				\end{tikzpicture}
				\vspace{-3.17em}
        
    			\end{minipage}
			};
			\node[fancytitle, right=10pt] at (box.north west) {unreg matrix fact};
		\end{tikzpicture}
         };
         
         \draw[>=stealth,->,thick] (0.88,-3) -- (13.15,-3);
         \node at (-0.5,-3) {\emph{lower dimension}};
         \draw[>=stealth,->,thick,] (9.7,-3.35) -- (-1.72,-3.35);
         \node at (11.5,-3.35) {\emph{higher interpretability}};
         
\end{tikzpicture}
\caption{Illustration of the dimension-interpretability tradeoff. From left to right: Kolmogorov factorization, NNM, quantum model, unregularized matrix factorization (low-rank matrix completion; $R = AB$, $A \in \AR^{U \times k}$, $B \in \AR^{k \times I}$).}
\label{Fig:dim.interpretability.tradeoff}
\end{figure}
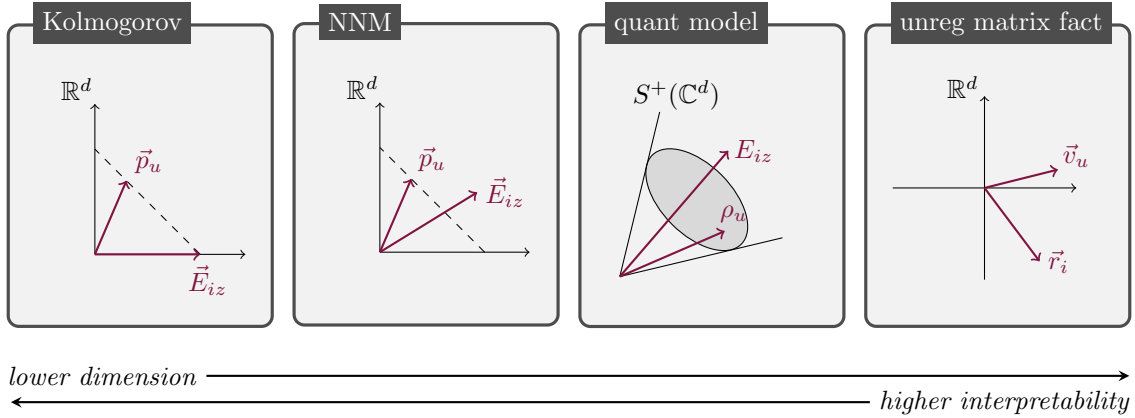

Given a dataset that does not allow for the computation of NNMs, we are confronted with the challenge to find other models with high expressiveness which can be fitted to larger classes of datasets. Quantum models meet this challenge by realizing a tradeoff between high interpretability and low dimensionality of representations. To illustrate the flexibility of quantum models, we derived Theorem~\ref{thm:fkerjtk4jk} which states that sparse datasets admit low-dimensional quantum models. In physics, the relation between Kolmogorov factorizations and quantum models is usually established through `quantization rules' or deformation quantization~\cite{bayen1977quantum}. Here we provide a novel relation between NNMs and quantum models by showing that a large subset of quantum state space and measurement space can be interpreted as compression of pseudo sparse NNMs; see Theorem~\ref{cor:fj345ewklnnr}. Moreover, we show that quantum models can be a computationally convenient relaxation of NNMs because there exist datasets for which we can recover the optimal NNM from the optimal quantum model; see Lemma~\ref{lem:fwerg435e}.

All of the above considerations would be of more theoretical than practical nature if we were not able to fit approximate quantum models to datasets. Therefore, we used a simple alternating optimization scheme to compute quantum models for real datasets, and we demonstrated the power of quantum models in the context of item recommendation. More precisely, we compute quantum models for MovieLens datasets and observed that quantum-based recommender systems can compete with state-of-the-art methods like SVD++ and PureSVD. 

Due to their value in recommendation, quantum models provide a new link between data science (item recommendation), information theory (communication complexity), mathematical programming (positive semidefinite factorizations) and physics (quantum models).

\section{Acknowledgment}

I thank David Gross, Aram Harrow, Robin Kothari, Patrick Pletscher, Renato Renner, L{\'i}dia del Rio and Sharon Wulff for fruitful discussions. In particular, I thank Robin Kothari for pointing out reference~\cite{Gavinsky} about one-way communication complexity of partial functions. I acknowledge funding by the ARO grant Contract Number W911NF-12-0486, and I acknowledge funding by the Swiss National Science Foundation through a fellowship. This work is preprint MIT-CTP/4761.

\appendix

\section{Proof of Theorem~\ref{cor:fj345ewklnnr}}

Theorem~\ref{cor:fj345ewklnnr} turns out to be a corollary of the following Theorem~\ref{Thm:fjewklnnr} from~\cite{stark2014compressibility}.

\begin{theorem}[see~\cite{stark2014compressibility}]\label{Thm:fjewklnnr}
	Let $J = U + IZ$, let $\varepsilon$ satisfy $\frac{4}{\sqrt{JD}} \leq \varepsilon \leq \frac{1}{2}$ and let $\bigl((\rho_{u})_{u}, (E_{iz})_{iz} \bigr)$ be a $D$-dimensional quantum model. Assume
	\begin{itemize}
				\item[]	$\bigl((\rho_{x})_{x}, (E_{yz})_{yz} \bigr)$ is \emph{pseudo low-rank}, and			
				\item[]	$d = \mathcal{O}\Bigl( \frac{1}{\varepsilon^2} \; \ln\bigl( 4JD \bigr) \Bigr)$,
	\end{itemize}
	Then, there exists a $d$-dimensional quantum model $\bigl( (\rho'_{u})_{u}$ $(E'_{iz})_{iz} \bigr)$ satisfying
	\[
		\bigl| \tr(\rho_{u}E_{iz}) - \tr(\rho'_{u}E'_{iz}) \bigr| \leq \mathcal{O}(\varepsilon)
	\]
	for all $u,i,z$.
\end{theorem}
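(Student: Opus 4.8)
The plan is to prove the statement by a Johnson--Lindenstrauss-type random projection adapted to preserve the positive-semidefinite and completeness structure of a quantum model. First I would fix eigendecompositions $\rho_u=\sum_a p_a^{(u)}\,v_a^{(u)}(v_a^{(u)})^*$ with unit $v_a^{(u)}$, and for each item $i$ write the measurement operators through their eigenvectors. By pseudo-low-rankness, for every $i$ there is a distinguished outcome $z'(i)$ such that all operators $E_{iz}$ with $z\neq z'(i)$ have rank $O(1)$; I would treat $E_{iz'(i)}$ only implicitly via the completeness relation $E_{iz'(i)}=\mathbb{I}_D-\sum_{z\neq z'(i)}E_{iz}$ and track the low-rank operators explicitly. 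The collection of unit vectors whose pairwise inner products must be controlled is then the union of all state eigenvectors (at most $UD$ of them) and all eigenvectors of the low-rank measurement operators (at most $I(Z-1)\cdot O(1)=O(IZ)$ of them, since $Z=O(1)$). The total count is $N=O(UD+IZ)=O(JD)$, which is exactly the source of the $\ln(4JD)$ in the target dimension.

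I would then draw a random linear map $\Phi:\mathbb{C}^D\to\mathbb{C}^d$ with $d=O(\varepsilon^{-2}\ln(4JD))$ (i.i.d.\ complex Gaussian entries scaled so that $\mathbb{E}\|\Phi v\|^2=\|v\|^2$), and define the compressed model by
\[
	\rho'_u=\sum_a p_a^{(u)}\,\frac{(\Phi v_a^{(u)})(\Phi v_a^{(u)})^*}{\|\Phi v_a^{(u)}\|^2},\qquad
	E'_{iz}=\sum_b q_b^{(iz)}\,\frac{(\Phi w_b^{(iz)})(\Phi w_b^{(iz)})^*}{\|\Phi w_b^{(iz)}\|^2}\ \ (z\neq z'(i)),
\]
together with $E'_{iz'(i)}=\mathbb{I}_d-\sum_{z\neq z'(i)}E'_{iz}$. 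Normalizing each rank-one block by its own projected norm forces $\tr(\rho'_u)=1$ exactly and keeps the explicitly built $E'_{iz}$ manifestly psd, so $\rho'_u\in\Delta'$ and the low-rank blocks sit in the cone of $\mathcal{M}'$. The JL guarantee --- which holds simultaneously for all $N$ vectors with positive probability once $d=O(\varepsilon^{-2}\ln N)$ --- gives $\|\Phi v\|^2=1\pm O(\varepsilon)$ and $|\langle\Phi v,\Phi w\rangle-\langle v,w\rangle|\leq O(\varepsilon)$ for the tracked vectors. Substituting these into $\tr(\rho'_uE'_{iz})=\sum_{a,b}p_aq_b|\langle\Phi v_a,\Phi w_b\rangle|^2/(\|\Phi v_a\|^2\|\Phi w_b\|^2)$ and using $\sum_a p_a=1$ and $\sum_b q_b=\tr(E_{iz})=O(1)$ yields $|\tr(\rho_uE_{iz})-\tr(\rho'_uE'_{iz})|\leq O(\varepsilon)$ for every $z\neq z'(i)$. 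The distinguished outcome is then handled for free: since $\tr(\rho'_u)=1$, the identity $\tr(\rho'_uE'_{iz'(i)})=1-\sum_{z\neq z'(i)}\tr(\rho'_uE'_{iz})$ mirrors $\tr(\rho_uE_{iz'(i)})=1-\sum_{z\neq z'(i)}\tr(\rho_uE_{iz})$, so its error is at most $(Z-1)\cdot O(\varepsilon)=O(\varepsilon)$ because $Z=O(1)$. This is precisely why the hypothesis keeps $Z$ small and allows only one high-rank operator per item.

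The main obstacle, and the step I would spend the most care on, is guaranteeing that the implicitly defined $E'_{iz'(i)}$ are genuinely positive semidefinite, i.e.\ that $\sum_{z\neq z'(i)}E'_{iz}\preceq\mathbb{I}_d$ for every item. Preserving individual inner products is not enough: this is an operator-norm statement about the image under $\Phi$ of the rank-$O(1)$ operator $\sum_{z\neq z'(i)}E_{iz}=\mathbb{I}_D-E_{iz'(i)}\preceq\mathbb{I}_D$. I would upgrade the pointwise JL bound to a restricted-isometry bound on each item's $O(1)$-dimensional column space via a covering-net argument (an $\varepsilon$-net of a rank-$r$ subspace has $e^{O(r\log(1/\varepsilon))}$ points), followed by a union bound over the $I$ items; since $\log I\leq\log(JD)$ and $\log(1/\varepsilon)=O(\log(JD))$ under the hypothesis $\varepsilon\geq 4/\sqrt{JD}$, this costs only an extra $O(\varepsilon^{-2}(1+\log I))$ in $d$ and is absorbed into the stated bound. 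The resulting $\sum_{z\neq z'(i)}E'_{iz}\preceq(1+O(\varepsilon))\mathbb{I}_d$ is then fixed by rescaling the explicit operators by $(1+O(\varepsilon))^{-1}$ (redistributing the deficit into $E'_{iz'(i)}$), which restores exact completeness and psd-ness while perturbing all trace values by only $O(\varepsilon)$. Finally, the admissibility condition $\varepsilon\geq 4/\sqrt{JD}$ ensures $d=O(\varepsilon^{-2}\ln(4JD))\leq D$ and that the target accuracy exceeds the fluctuation scale of $\Phi$, so the union bound over the $O(JD)$ events succeeds with positive probability and a valid map $\Phi$ exists.
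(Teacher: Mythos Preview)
The paper does not actually prove this theorem. It is quoted verbatim from \cite{stark2014compressibility} and used as a black box: the only argument in the present paper is the two-line observation that the diagonal embedding $\vec{p}_u\mapsto\mathrm{diag}(\vec{p}_u)$, $\vec{E}_{iz}\mapsto\mathrm{diag}(\vec{E}_{iz})$ sends a pseudo-sparse NNM to a pseudo-low-rank quantum model, after which Theorem~\ref{Thm:fjewklnnr} is invoked to obtain Theorem~\ref{cor:fj345ewklnnr}. So there is no in-paper proof to compare your proposal against.

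That said, your Johnson--Lindenstrauss scheme is precisely the method of the cited reference, and your sketch is sound. You have correctly isolated the one nontrivial structural issue --- that defining $E'_{iz'(i)}$ by complementation requires $\sum_{z\neq z'(i)}E'_{iz}\preceq\mathbb{I}_d$, which is an operator-norm statement not delivered by pointwise JL --- and your cure via a restricted-isometry bound on each item's $O(1)$-dimensional eigenspace (net of size $(C/\varepsilon)^{O(1)}$, union bound over $I$ items, then a global $(1+O(\varepsilon))^{-1}$ rescaling) is exactly how this gap is closed. One small point worth making explicit when you write it out: in the error estimate for $\sum_{a,b}p_a q_b\bigl|\langle\Phi v_a,\Phi w_b\rangle\bigr|^2/(\|\Phi v_a\|^2\|\Phi w_b\|^2)$ the index $a$ ranges over up to $D$ state-eigenvectors, so the reason the accumulated error stays $O(\varepsilon)$ is that the per-pair deviation is weighted by $p_a q_b$ with $\sum_a p_a=1$ and $\sum_b q_b=\tr(E_{iz})\leq\mathrm{rank}(E_{iz})=O(1)$; this is where the low-rank hypothesis is genuinely consumed, not merely in the vector count feeding into $\ln(4JD)$.
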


\begin{proof}[Proof of Theorem~\ref{cor:fj345ewklnnr}]
The map  
\beq\begin{split}\label{dwj32i5jfnej}
	\vec{p}_u &\mapsto \rho_{u} := \mathrm{diag}(\vec{p}_u) = \sum_{j=1}^{D} (\vec{p}_{u})_{j} \; \vec{e}_j \vec{e}^T_j\\
	\vec{E}_{iz} &\mapsto E_{iz} := \mathrm{diag}(\vec{E}_{iz}) = \sum_{j=1}^{D} (\vec{E}_{iz})_{j} \; \vec{e}_j \vec{e}^T_j.
\end{split}\eeq
realizes an embedding of NNMs into quantum models. Applying Theorem~\ref{Thm:fjewklnnr} to that embedding of NNMs suffices to prove the claim.	
\end{proof}

\section{Proof of Theorem~\ref{thm:fkerjtk4jk}}

	Let $N$ be the maximal number of times an item has been rated. Hence, $N \sim \mathcal{O}(1)$ in $U$. Recall that $\Gamma \subseteq [U] \times [I]$ marks the provided ratings. Thus, on the complement $\Gamma^c$, the ratings are unspecified. Set $\Gamma_i := \{ u \in [U] | (u,i) \in \Gamma \}$. We observe that 
	\beq\begin{split}\label{fw4k5jhfr}
		\rho_u	&= \vec{e}_u \vec{e}_u^T, \\
		E_{iz}	&= \Bigl( \sum_{u \in \Gamma_{i}}  \delta_{z,R_{ui}} \; \vec{e}_{u} \vec{e}_{u}^T \Bigr) + \Bigl( \sum_{u \in \Gamma^c_{i}}  \delta_{z,1} \; \vec{e}_{u} \vec{e}_{u}^T \Bigr)
	\end{split}\eeq
	is a $U$-dimensional quantum model that fits the known data perfectly. Indeed, for every $(u,i) \in \Gamma$,
	\beq\begin{split}\nn
		\tr(\rho_u E_{iz}) 
		&=	\Bigl( \sum_{u' \in \Gamma_{i}} \delta_{z,R_{u'i}} \; \delta_{uu'} \Bigl) + \Bigl( \sum_{u' \in \Gamma^c_{i}} \delta_{z,1} \; \delta_{uu'} \Bigl) \\
		&=	\delta_{z,R_{ui}}.
	\end{split}\eeq
	Note that for all $z \geq 2$, 
	\beq\begin{split}\nn
		&\rank( E_{iz} )\\
		&\leq \rank\Bigl( \sum_{u \in \Gamma_{i}}  \delta_{z,R_{ui}} \; \vec{e}_{u} \vec{e}_{u}^T \Bigr) + \rank\Bigl( \sum_{u \in \Gamma^c_{i}}  \delta_{z,1} \; \vec{e}_{u} \vec{e}_{u}^T \Bigr) \\
		&\leq N \sim \mathcal{O}(1)
	\end{split}\eeq
	because $\rank(A+B) \leq \rank(A) + \rank(B)$ for all matrices $A,B$. Hence, the quantum model~\eqref{fw4k5jhfr} is \emph{pseudo-low rank}. Therefore, applying Theorem~\ref{Thm:fjewklnnr} is sufficient to conclude the proof.

\section{Proof of Theorem~\ref{thm:gap.thm}}

	By the embedding~\eqref{fekj45k653}, $d_{\mathrm{Q}} \leq d_{\mathrm{NNM}}$ always. To prove the main claim of the theorem, we relate $d_{\mathrm{NNM}}$ and $d_{\mathrm{Q}}$ to the one-way communication complexity of Boolean functions. Before describing this relation, we remind the reader of randomized (quantum) bounded error one-way communication complexity of Boolean functions.
	
	Let $f: \{ 0,1 \}^n \times \{ 0,1 \}^m \rightarrow \{0,1\}$ be a Boolean function. Assume Alice receives an $n$-bit string $x$ and Bob receives and $m$-bit string $y$. Both Alice and Bob are given the possibility to sample private random variables. In one-way randomized communication, Alice is allowed to send a random variable $W$ with some distribution $\vec{p}_x \in \AR^d_+$ of her choice to Bob. Upon receiving this message, Bob tries to guess $f(x,y)$ by evaluating a random variable $V_y \in \{0,1\}$ on $W$ and on his private randomness. If $V_y = 0$ then Bob claims $f(x,y) = 0$ and otherwise, Bob claims $f(x,y) = 1$.  The one-way bounded error communication complexity of $f$ is the minimal number of bits Alice needs to send to Bob so that Bob's guess equals $f(x,y)$ with probability  $\geq 1-\varepsilon$ where $\varepsilon \in [0,1/2)$ (usually, $\varepsilon = 1/3$). By a technique called amplification, you can show that changing the value of $\varepsilon \in [0,1/2)$ changes the communication complexity only by a constant factor. Communication complexity of Boolean functions has been studied for both total and partial functions. In the remainder, we are focusing on partial functions where $f$ is only defined on some subset $\Gamma \subseteq \{ 0,1 \}^n \times \{ 0,1 \}^m$ of all inputs.
	
	The definition of \emph{quantum} communication complexity is almost identical to the definition of communication complexity: instead of sending a random variable to Bob, Alice sends a quantum system in some state $\rho_x$ to Bob who performs a measurement $(E_y,I - E_y)$ on $\rho_x$ and some auxiliary quantum system (instead of sampling a random variable $V_y$). The complexity of the communication protocol is set equal the logarithm of the dimension of $\rho_x$.
	
The following Lemma~\ref{lem:fiewjhfijoi} (used in~\cite{stark2014compressibility} to compress communication protocols) states approximate equality between 
\begin{itemize}
\item the optimal dimension of approximate NNMs fitting a partially known rating matrix $R_{\Gamma}$ and
\item the bounded error one-way communication complexity of a partial function whose communication matrix is $R_{\Gamma}$.
\end{itemize}
The analogous result about quantum communication complexity is stated in Lemma~\ref{lem:fekwrhfkj4eht}.

\begin{lemma}\label{lem:fiewjhfijoi}
	Let $\varepsilon \in [0,1/2)$, $f: \{ 0,1 \}^n \times \{ 0,1 \}^m \supseteq \Gamma \rightarrow \{0,1\}$ be a partial function, and let $A$ be the communication matrix of $f$. We denote by $R^1(f)$ the $\varepsilon$-bounded error randomized one-way communication complexity of $f$. Let $d^*$ denote the dimension of the lowest-dimensional NNM $\vec{p}_x, \vec{E}_{y}$ satisfying 
	\beq\begin{split}\label{fewjfkj3k45j}
		| \vec{p}_x^T \vec{E}_{y} - A_{xy} | \left\{
\begin{array}{ll}
  \geq 1-\varepsilon,	&\text{if $A_{xy} = 1$},   \\
  \leq  \varepsilon,	&\text{if $A_{xy} = 0$}
\end{array} \right.
	\end{split}\eeq
	for all $(x,y) \in \Gamma$. Then, $R^1(f)-1 \leq \log_2(d^*) \leq R^1(f)$.
\end{lemma}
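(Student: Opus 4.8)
The plan is to prove the two inequalities $\log_2(d^*)\le R^1(f)$ and $R^1(f)-1\le\log_2(d^*)$ separately, in each direction by spelling out an explicit, essentially tautological dictionary between one-way protocols and NNMs. For the upper bound, I would start from an optimal randomized one-way protocol for $f$ of cost $R^1(f)$: on input $x$ Alice sends a message $W$ drawn from some distribution supported on a set of size $d:=2^{R^1(f)}$ (pad the message to the worst-case length and the distribution with zeros), and Bob, on receiving $w$ and using private randomness, outputs a bit. Let $\vec{p}_x\in\Delta\subseteq\AR_+^d$ be Alice's message distribution, and define $\vec{E}_{y,1}\in[0,1]^d$ by letting $(\vec{E}_{y,1})_w$ be the probability that Bob outputs $1$ given message $w$ and input $y$, with $\vec{E}_{y,2}:=(1,\dots,1)^T-\vec{E}_{y,1}$. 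Then $(\vec{E}_{y,1},\vec{E}_{y,2})\in\mathcal{M}$ — the normalization constraint is exactly ``Bob outputs $0$ or $1$'' — and $\vec{p}_x^T\vec{E}_{y,1}$ equals Bob's acceptance probability on $(x,y)$. Hence the $(1-\varepsilon)$ success guarantee is literally condition~\eqref{fewjfkj3k45j}, so this is a feasible $d$-dimensional NNM and $d^*\le d=2^{R^1(f)}$, i.e.\ $\log_2(d^*)\le R^1(f)$.

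For the lower bound I would run the same dictionary in reverse. Given an optimal NNM $\vec{p}_x,(\vec{E}_{y,1},\vec{E}_{y,2})$ of dimension $d^*$ satisfying~\eqref{fewjfkj3k45j}, Alice samples $W\in[d^*]$ according to $\vec{p}_x$ and sends it to Bob using a fixed-length encoding of $\lceil\log_2 d^*\rceil$ bits; on receiving $w$, Bob outputs $1$ with probability $(\vec{E}_{y,1})_w\in[0,1]$ using private randomness, and $0$ otherwise. His acceptance probability is $\sum_w(\vec{p}_x)_w(\vec{E}_{y,1})_w=\vec{p}_x^T\vec{E}_{y,1}$, which by~\eqref{fewjfkj3k45j} is $\ge 1-\varepsilon$ when $A_{xy}=1$ and $\le\varepsilon$ when $A_{xy}=0$, so Bob errs with probability at most $\varepsilon$ on every $(x,y)\in\Gamma$. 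Therefore $R^1(f)\le\lceil\log_2 d^*\rceil\le\log_2(d^*)+1$, which is $R^1(f)-1\le\log_2(d^*)$.

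The argument is almost entirely bookkeeping: the only place where a genuine (small) loss enters is the integer rounding involved in encoding an element of $[d^*]$ into bits, and that is precisely what the ``$-1$'' in the statement absorbs. I do not expect a real obstacle; the point that needs a little care is checking that the two conditions genuinely match termwise under the dictionary — that in the one-way model with only private randomness Alice's message is honestly just a probability distribution over a finite alphabet (so ``message distribution $\leftrightarrow$ user vector $\vec{p}_x$'' and ``Bob's conditional acceptance function $\leftrightarrow$ item vector $\vec{E}_{y,1}$'' is an exact correspondence with no hidden structure), and that the two-outcome simplex constraint defining $\mathcal{M}$ corresponds exactly to Bob producing a Boolean output. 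Once these are observed, the feasibility condition~\eqref{fewjfkj3k45j} and the $\varepsilon$-bounded-error condition are the same statement, and both inequalities fall out immediately.
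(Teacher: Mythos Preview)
Your proposal is correct and follows the same two-direction scheme as the paper. The protocol $\to$ NNM direction (yielding $\log_2(d^*)\le R^1(f)$) is essentially identical: both you and the paper marginalize over Bob's private randomness to obtain an acceptance vector $\vec{E}'_y\in[0,1]^{2^c}$ paired with Alice's message distribution $\vec{p}_x$.

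In the NNM $\to$ protocol direction you take a slightly more elementary route than the paper. The paper invokes Carath\'eodory's theorem to write $\vec{E}_y\in[0,1]^d=\mathrm{conv}(\{0,1\}^d)$ as a convex combination $\sum_{j=1}^{d+1}q_{yj}\vec{F}^y_j$ of at most $d+1$ binary vectors, has Bob sample $j'$ from $\vec{q}_y$ as his private randomness, and then output $1$ iff $W\in\mathrm{supp}(\vec{F}^y_{j'})$. You instead have Bob directly flip a coin with bias $(\vec{E}_{y,1})_w$. Both constructions give acceptance probability exactly $\vec{p}_x^T\vec{E}_y$ and the same communication cost $\lceil\log_2 d^*\rceil$; your version simply bypasses the Carath\'eodory decomposition, which is not needed once one allows Bob's private randomness to realize an arbitrary Bernoulli trial. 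The bookkeeping for the ``$-1$'' via $\lceil\log_2 d^*\rceil\le\log_2(d^*)+1$ is the same in both arguments.
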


\begin{proof}

We begin with the proof of the claim $R^1(f)-1 \leq \log_2(d^*)$. Let $\vec{p}_x, \vec{E}_y$ be a $d$-dimensional approximate NNM. We claim that we can use the model $\vec{p}_x, \vec{E}_y$ to construct a communication protocol with cost $\leq \log_2(d) +1$. 

By definition of NNMs, $\vec{E}_y \in [0,1]^d = \mathrm{conv}(\{0,1\}^d)$. Thus, by Caratheodory's theorem, there exist a probability distribution $\vec{q}_y \in \Delta_{d+1}$ such that $\vec{E}_y = \sum_{j=1}^{d+1} q_{yj} \vec{F}^y_{j}$ for some $\vec{F}^y_{j} \in \{0,1\}^d$. We need to construct a communication protocol with cost $\leq \log_2(d) +1$. We suggest the following protocol.

Upon receiving input $x$, Alice sends a sample of the random variable $W$ with alphabet $[d]$ and distribution $\vec{p}_x \in \Delta_d$ to Bob. Upon receiving $W$ from Alice, Bob first samples his private $d+1$-sided dice (private randomness) whose outcomes are distributed according $\vec{q}_y$; we call the sampled outcome $j'$. Then, Bob claims $f(x,y) = 1$ if $W \in \mathrm{supp}(F^y_{j'})$. Otherwise, Bob claims $f(x,y) = 0$. 

To check that this protocol proves the claim $R^1(f)-1 \leq \log_2(d^*)$, we need to verify that it has high enough winning probability. Assume $f(x,y) = 1$. Then, by assumption on the considered NNM,
\beq\begin{split}
	1-\varepsilon
	&\leq		\vec{p}_x^T \vec{E}_y 
	=		\vec{p}_x^T \Bigl( \sum_{j=1}^{d+1} q_{yj} \vec{F}^y_{j} \Bigr) 
	= 		\sum_{j=1}^{d+1} \vec{q}_y^T \otimes \vec{p}_x^T \; \vec{e}_j \otimes \vec{F}^y_{j}
	= 		\sum_{j=1}^{d+1} \mathbb{P}[ j' = j, W \in \mathrm{supp}(\vec{F}^y_{j}) ]
\end{split}\eeq
The marginal distribution on the rhs is the success probability of the proposed communication protocol. Said protocol requires $\lceil \log_2(d) \rceil$ bits of information and therefore, $R^1(f) \leq \log_2(d) + 1$.

Next we prove the remaining claim $\log_2(d^*) \leq R^1(f)$. To prove this claim it suffices to show that starting from a communication protocol with cost $c$ we can construct a valid $2^c$-dimensional NNM. Assume that there exists a valid communication protocol with cost $c$. Hence, 
\begin{itemize}
\item		there exists a random variable $W$ (i.e., the random variable sent to Bob) with distribution $\vec{p}_x \in \Delta_{2^c}$ and 
\item		there exists a random variable $B_y: \Omega \times \Omega_{\text{private}} \rightarrow \{0,1\}$
\end{itemize}
such that $\mathbb{P}[B_y=1] \geq 1- \varepsilon$ if $f(x,y) = 1$ and $\mathbb{P}[B_y=1] \leq \varepsilon$ if $f(x,y) = 0$. Here, $\Omega = \{ \omega_1,...,\omega_{2^c} \}$ is the sample space corresponding to Alice's message $W$ and $\Omega_{\text{private}} = \{ \sigma_1,...,\sigma_{2^r} \}$ is the sample space of Bob's private randomness. We define the $2^{c+r}$-dimensional vector $\vec{E}_y = \sum_{ij}  (\vec{E}_y)_{ij} \, \vec{e}_i \otimes \vec{f}_j \in \{0,1\}^{2^c} \otimes \{0,1\}^{2^r}$ by
\beq
	(\vec{E}_y)_{ij} = \left\{ 	\begin{array}{ccc}  	1, &\text{ if $(\omega_i,\sigma_j) \in B^{-1}(1)$}   \\  
										0, &\text{ if $(\omega_i,\sigma_j) \in B^{-1}(0)$} .  
						\end{array}\right.
\eeq
We denote by $\vec{q}_y \in \Delta_{2^r}$ the distribution of Bob's randomness. By assumption, $\vec{p}_x, B_y$ is a valid communication protocol. Therefore, if $f(x,y) = 1$,
\beq\begin{split}
	1-\varepsilon
	&\leq		\mathbb{P}_{\vec{p}_x \otimes \vec{q}_y}[B_y = 1] 
	= 		\vec{p}_x^T \otimes \vec{q}_y^T \, E_y \\
	&= 		\vec{p}_x^T \otimes \vec{q}_y^T \, \Bigl( \sum_{i=1}^{2^c} \sum_{j=1}^{2^r} (\vec{E}_y)_{ij} \, \vec{e}_i \otimes \vec{f}_j \Bigr)
	= 		\sum_{i=1}^{2^c} \sum_{j=1}^{2^r} (\vec{E}_y)_{ij} \, (\vec{p}_x)_i (\vec{q}_y)_j \\
	&= 		\vec{p}_x^T \, \vec{E}'_y
\end{split}\eeq
where $\vec{E}'_y$ is defined by $(\vec{E}'_y)_i := \sum_{j=1}^{2^r} (\vec{E}_y)_{ij} (\vec{q}_y)_j$. Note that $\vec{E}'_y \in [0,1]^{2^c}$. Hence, $\vec{p}_x, \vec{E}'_y$ forms a valid $2^c$-dimensional approximate NNM. When setting $c = R^1(f)$ we arrive at the conclusion that $d^* \leq 2^{R^1(f)}$.

\end{proof}

The following Lemma~\ref{lem:fekwrhfkj4eht} about one-way quantum communication complexity is proven analogously\footnote{We simply need to replace bits with qubits and indicator vectors with projectors.} to Lemma~\ref{lem:fiewjhfijoi}.

\begin{lemma}\label{lem:fekwrhfkj4eht}
	Let $\varepsilon,f,\Gamma$ and $A$ be defined as in Lemma~\ref{lem:fiewjhfijoi}. We denote by $Q^1(f)$ the $\varepsilon$-bounded error \emph{quantum} one-way communication complexity of $f$. Let $d^*$ denote the dimension of the lowest-dimensional quantum model $\rho_x, E_{y}$ satisfying 
	\beq\begin{split}\label{fewjfkj3k45j}
		| \tr(\rho_x E_y) - A_{xy} | \left\{
\begin{array}{ll}
  \geq 1-\varepsilon,	&\text{if $A_{xy} = 1$},   \\
  \leq  \varepsilon,	&\text{if $A_{xy} = 0$}
\end{array} \right.
	\end{split}\eeq
	for all $(x,y) \in \Gamma$. Then, $Q^1(f)-1 \leq \log_2(d^*) \leq Q^1(f)$.
\end{lemma}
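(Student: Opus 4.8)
The plan is to transcribe the proof of Lemma~\ref{lem:fiewjhfijoi} into the quantum setting, substituting a quantum message $\rho_x$ for Alice's random variable $W$, a two-outcome POVM $(E_y, I-E_y)$ for Bob's indicator-vector test, and an ancilla plus partial trace for Bob's Caratheodory decomposition and private randomness. As there, I would establish the two inequalities $Q^1(f)-1\le\log_2(d^*)$ and $\log_2(d^*)\le Q^1(f)$ separately.

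For $Q^1(f)-1\le\log_2(d^*)$, I start from a lowest-dimensional quantum model $\rho_x\in\Delta'$, $(E_y,I-E_y)\in\mathcal{M}'$ on $\mathbb{C}^{d^*}$ obeying the stated condition on $\tr(\rho_x E_y)$ versus $A_{xy}$. In the associated protocol, Alice sends $\rho_x$ on input $x$ --- this costs $\lceil\log_2 d^*\rceil\le\log_2 d^*+1$ qubits --- and Bob, on input $y$, measures the received system in the POVM $(E_y,I-E_y)$, which is legitimate since $0\preceq E_y\preceq I$, and declares $f(x,y)=1$ precisely when the first outcome occurs. His acceptance probability is exactly $\tr(\rho_x E_y)$, which is $\ge 1-\varepsilon$ if $A_{xy}=1$ and $\le\varepsilon$ if $A_{xy}=0$; hence this is an $\varepsilon$-bounded-error protocol and $Q^1(f)\le\log_2 d^*+1$. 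Note that this direction is even simpler than in Lemma~\ref{lem:fiewjhfijoi}: a POVM effect is already a valid binary measurement, so no Caratheodory step and no private randomness are needed, and the $-1$ merely reflects rounding the message length to an integer number of qubits.

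For $\log_2(d^*)\le Q^1(f)$, I take an $\varepsilon$-bounded-error one-way quantum protocol of cost $c=Q^1(f)$. Without loss of generality Alice's message is a density matrix $\rho_x$ on $\mathbb{C}^{2^c}$ --- letting $\rho_x$ be mixed absorbs Alice's private randomness --- and Bob adjoins an ancilla in a fixed state $\sigma_y$ on some $\mathbb{C}^{2^r}$ (taking $\sigma_y$ diagonal absorbs Bob's private randomness) and measures $(M_y,I-M_y)$ on the joint system, accepting iff he sees the first outcome, so that $\tr\bigl[(\rho_x\otimes\sigma_y)M_y\bigr]$ is his acceptance probability. I then define the effective effect on Alice's register $E_y:=\mathrm{tr}_B\bigl[(I\otimes\sqrt{\sigma_y})\,M_y\,(I\otimes\sqrt{\sigma_y})\bigr]$. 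Because $0\preceq M_y\preceq I$ and the partial trace is positive and monotone, $0\preceq E_y\preceq \mathrm{tr}_B[I\otimes\sigma_y]=I$, so $(E_y,I-E_y)\in\mathcal{M}'$; and the cyclicity identity $\tr(\rho_x E_y)=\tr\bigl[(\rho_x\otimes\sigma_y)M_y\bigr]$ shows $\rho_x,E_y$ reproduces Bob's acceptance probability, hence satisfies the required condition. Thus $(\rho_x)_x,(E_y)_y$ is a valid $2^c$-dimensional quantum model of the prescribed type, giving $d^*\le 2^{Q^1(f)}$.

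The step I expect to be the main obstacle is the converse reduction: one has to argue that the most general one-way quantum protocol --- Bob allowed an arbitrary-dimensional ancilla and private randomness --- is faithfully captured by the single partial-trace construction above, and, crucially, that the quantum model it yields is supported on $\mathbb{C}^{2^c}$ alone, with Bob's ancilla dimension $2^r$ playing no role in the dimension count because it is traced out. This is the precise quantum analog of the classical fact, used in Lemma~\ref{lem:fiewjhfijoi}, that Bob's private randomness is free; once it is pinned down, the rest is the routine ``bits $\to$ qubits, indicator vectors $\to$ effects'' translation flagged in the footnote.
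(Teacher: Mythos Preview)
Your proposal is correct and follows exactly the approach the paper indicates: the paper does not spell out a separate proof but states that Lemma~\ref{lem:fekwrhfkj4eht} ``is proven analogously [to Lemma~\ref{lem:fiewjhfijoi}]\ldots we simply need to replace bits with qubits and indicator vectors with projectors,'' and your write-up is precisely that translation, including the correct observation that the Caratheodory step becomes unnecessary in the forward direction and that Bob's ancilla is eliminated by a partial trace in the converse.
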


The following Theorem~\ref{thm:fjewhfjh} bounds the communication complexity of the so called $\alpha$-Partial Matching problem~\cite{Gavinsky}.

\begin{theorem}[see~\cite{Gavinsky}]\label{thm:fjewhfjh}
	Let $\alpha \in (0,1/4]$. The randomized bounded error one-way communication complexity of the $\alpha$-Partial Matching problem is $\Theta(\sqrt{n/\alpha})$ while the quantum bounded error one-way communication complexity of the $\alpha$-Partial Matching problem is $\mathcal{O}(\log(n)/\alpha)$.
\end{theorem}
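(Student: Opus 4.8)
The plan is to prove the two bounds separately: an explicit quantum protocol giving the upper bound $\mathcal{O}(\log(n)/\alpha)$, and a matching pair of classical bounds $\Theta(\sqrt{n/\alpha})$ whose hard half is a Fourier-analytic lower bound. Throughout I use the following description of the problem. Alice holds $x \in \{0,1\}^n$; Bob holds a matching $M$ of $\alpha n$ edges on $[n]$ together with $w \in \{0,1\}^{|M|}$; under the promise that the edge-parity vector $(x_i \oplus x_j)_{(i,j)\in M}$ equals either $w$ or its bitwise complement, Bob must decide which case holds.

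For the quantum upper bound I would have Alice send the phase state $|\psi_x\rangle = n^{-1/2}\sum_{i=1}^n (-1)^{x_i}|i\rangle$, at a cost of $\lceil\log_2 n\rceil$ qubits. Bob measures in the orthonormal basis that refines $M$: for each edge $(i,j)$ the two vectors $2^{-1/2}(|i\rangle \pm |j\rangle)$, together with $|k\rangle$ for each unmatched vertex $k$. A one-line amplitude computation shows that, for a given edge, the outcome $2^{-1/2}(|i\rangle+|j\rangle)$ occurs precisely when $x_i\oplus x_j = 0$ and $2^{-1/2}(|i\rangle-|j\rangle)$ precisely when $x_i\oplus x_j = 1$, each with probability $2/n$. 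Summing over the $\alpha n$ edges, with probability $2\alpha$ Bob lands on a matched edge and learns its parity exactly; comparing that single parity against the corresponding coordinate of $w$ already decides between the two promise cases. Using $\mathcal{O}(1/\alpha)$ independent copies of $|\psi_x\rangle$ raises the hitting probability above $2/3$ (since $(1-2\alpha)^{\mathcal{O}(1/\alpha)} \leq 1/3$), for a total of $\mathcal{O}(\log(n)/\alpha)$ qubits.

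For the classical upper bound I would let Alice transmit the values $x_i$ on a uniformly random index set $S\subseteq[n]$ of size $s = \Theta(\sqrt{n/\alpha})$. Bob can recover a parity $x_i\oplus x_j$ exactly when both endpoints of some matched edge lie in $S$; each of the $\alpha n$ edges is fully sampled with probability $\approx (s/n)^2$, so the expected number of readable edges is $\approx \alpha s^2/n = \Theta(1)$, and with constant probability Bob obtains at least one parity and decides as before. This transmits $\mathcal{O}(\sqrt{n/\alpha})$ bits, up to the logarithmic cost of naming the indices.

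The main obstacle is the classical lower bound $\Omega(\sqrt{n/\alpha})$, which I would establish by hypercontractivity. Fixing a deterministic message $a = a(x)\in\{0,1\}^c$ (randomness is removed by the averaging direction of Yao's principle), some message value has a preimage $A\subseteq\{0,1\}^n$ of density at least $2^{-c}$, and conditioned on $x\in A$ the distribution of the edge-parity vector $(x_i\oplus x_j)_{(i,j)\in M}$ must be noticeably biased away from uniform for the protocol to succeed. The crucial estimate bounds, for a uniformly random matching, the expected bias of a random edge-parity over $x$ drawn uniformly from $A$ by the level-$2$ Fourier weight $\sum_{|T|=2}\widehat{1_A}(T)^2$ of the indicator of $A$; the Bonami--Beckner (hypercontractive) inequality then controls this low-degree weight in terms of the density of $A$, forcing $c = \Omega(\sqrt{n/\alpha})$ before any constant distinguishing advantage can arise. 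Combining the three bounds gives the stated $\Theta(\sqrt{n/\alpha})$ and $\mathcal{O}(\log(n)/\alpha)$ estimates. The delicate point throughout is tracking the $\alpha$-dependence in the hypercontractive step, since both the relevant Fourier level and the matching size scale with $\alpha$.
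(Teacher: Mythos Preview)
The paper does not prove this theorem; it is quoted verbatim from~\cite{Gavinsky} and used as a black box in the derivation of Theorem~\ref{thm:gap.thm}. Your proposal is therefore not to be compared against a proof in the present paper but against the original argument of Gavinsky, Kempe, Kerenidis, Raz, and de~Wolf, and it is in fact a faithful high-level sketch of that argument: the phase-state protocol for the quantum upper bound, the birthday-paradox sampling for the classical upper bound, and the Fourier/hypercontractivity route (via Yao's principle and the Bonami--Beckner inequality) for the classical lower bound are exactly the three ingredients used there.

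Two small remarks on accuracy. First, in your classical upper bound the cost of naming the sampled indices is absorbed by using \emph{public} randomness: Alice and Bob agree on the random index set $S$ in advance, so Alice transmits only the $|S| = \Theta(\sqrt{n/\alpha})$ bit values and not the index names, matching the stated $\Theta(\sqrt{n/\alpha})$ without a logarithmic loss. Second, your description of the lower bound as controlling ``level-$2$ Fourier weight'' is slightly too specific: the actual argument in~\cite{Gavinsky} passes through an information-theoretic reduction and bounds the statistical distance of the edge-parity vector from uniform by the low-degree Fourier weight of the message preimage $A$ up to level $\Theta(\alpha n)$, not just level~$2$; the $\alpha$-dependence you flag as delicate enters precisely there, because the KKL/hypercontractive bound on $\sum_{|T|\leq k}\widehat{1_A}(T)^2$ is applied with $k$ proportional to $\alpha n$. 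With these adjustments your outline is correct.
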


Let $f: \Gamma \rightarrow \{0,1\}$ denote the partial function associated to the $\alpha$-Partial Matching problem, and let $A_{\Gamma}$ denote the corresponding (partially specified) communication matrix. Set $R_{\Gamma} = A_{\Gamma}$ and recall the definitions~\eqref{fekjlk5jfe} and~\eqref{fekjlk5j}. Then, by Theorem~\ref{thm:fjewhfjh} and Lemma~\ref{lem:fiewjhfijoi},
\[
	\log_2(d^*_{\mathrm{NNM}}) = \Theta(\sqrt{n/\alpha}).
\]
By Theorem~\ref{thm:fjewhfjh} and Lemma~\ref{lem:fekwrhfkj4eht},
\[
	\log_2(d^*_{\mathrm{Q}}) = \mathcal{O}(\log(n)/\alpha)
\]
for any $\alpha \in (0,1/4]$. This concludes the proof of Theorem~\ref{thm:gap.thm}.

\section{Proof of Lemma~\ref{lem:fwerg435e}}

Recall that $U,I,Z$ denote the number of users, items and possible ratings, respectively. Assume $U = I Z$ and let $\bigl( (\vec{p}_u)_u, (\vec{E}_{iz})_iz \bigr)$ be any $Z$-dimensional NNM with the property that
\[
	\mathbb{P}_{u}[ \hat{E}_{i} = z ] = \vec{p}_u^T \vec{E}_{iz}
\]
satisfies the following:
\begin{itemize}
\item		For each $(i,z) \in [I] \times [Z]$ there exists a user $u_{iz}$ such that $\mathbb{P}_{u_{iz}}[ \hat{E}_{i} = z ] = 1$.
\item		For each user $u \in [U]$ there exists $i_u \in [I]$ and $z_u \in [Z]$ such that $\mathbb{P}_{u}[ \hat{E}_{i_u} = z_u ]$.
\end{itemize}

These conditions define the class of NNMs referred to in Lemma~1. Let $\bigl( (\rho_{u})_{u}, (E_{iz})_{iz} \bigr)$ denote a $D$-dimensional quantum model satisfying $\tr( \rho_{u} E_{iz} ) = \vec{p}_{u}^T \vec{E}_{iz}$. By Cauchy-Schwarz,
\beq\label{fennsjher}
	1 = \tr( \rho_{u_{iz}} E_{iz} ) \leq \| \rho_{u_{iz}} \|_{F} \| E_{iz} \|_{F}
\eeq
implying $\| E_{iz} \|_{F} \geq 1$ because 
\beq\label{jhjefhJHbd}
	\| \rho_{u} \|_{F} \leq 1
\eeq 
always.\footnote{Use $\| \rho \|_{F} = \| O \rho O^T \|_{F}$ for every unitary matrix $O$ and the fact that $\| \vec{p} \|_{2} \leq 1$ whenever $\vec{p} \in \Delta$.} By $D = Z$ and $\sum_z E_{iz} = I$,
\beq\label{fejhjsehrj45}
\begin{split}
	Z &= \| I \|_{F}^{2} = \Bigl\| \sum_{z=1}^Z E_{iz}  \Bigr\|_{F}^2 \\ &= \Bigl( \sum_{z=1}^Z \| E_{iz} \|_{F}^2 \Bigr) + \Bigl( \sum_{z \neq z'} \tr( E_{iz} E_{iz'} )  \Bigr) \geq \sum_{z=1}^Z \| E_{iz} \|_{F}^2 
\end{split}
\eeq
because $\tr(AB) \geq 0$ if $A,B$ positive semidefinite. Assume $w \in [Z]$ was such that $\| E_{yw} \|_{F} > 1$. Then, by $\| E_{iz} \|_{F} \geq 1$ for all $i,z$,
\beq\label{fejhjsehrj45fwefe}
\begin{split}
	Z &\geq \sum_{z=1}^Z \| E_{iz} \|_{F}^2 \geq \| E_{yw} \|_{F} + (Z-1) \min_{z} \| E_{yz} \|_{F}^2 \\ &> Z \min_{z} \| E_{yz} \|_{F}^2 \geq Z.
\end{split}
\eeq
Therefore, $\| E_{yw} \|_{F} > 1$ is impossible. By~$\| E_{iz} \|_{F} \geq 1$, we conclude that $\| E_{iz} \|_{F} = 1$ for all $i,z$. By~\eqref{fennsjher} and~\eqref{jhjefhJHbd}, $\| \rho_{u_{iz}} \|_{F}= 1$. Let $\vec{s} \in \AR^D_+$ denote the vector of eigenvalues of $\rho_{u_{iz}}$ (ordered descendingly) so that
\[
	1 = \| \rho_{u_{iz}} \|_{F} = \| \vec{s} \|_{2}.
\]
Since $\| \vec{s} \|_{1} = 1$ this is only possible if $\vec{s} = (1,0,...,0)^T$, i.e., there exists $\vec{v}_{u_{iz}} \in \mathbb{C}^D$ ($\| \vec{v}_{u_{iz}} \|_{2} = 1$) such that $\rho_{u_{iz}} = \vec{v}_{u_{iz}} \vec{v}_{u_{iz}}^T$. Furthermore, Cauchy-Schwarz inequality~\eqref{fennsjher} is satisfied with equality. This happens if and only if there exists $\kappa \in \{ \pm 1 \}$ with $\rho_{u_{iz}} = \kappa E_{iz}$. Since both matrices $\rho_{u_{iz}}$ and  $E_{iz}$ are positive semidefinite, the alternative $\kappa = -1$ can be ruled out. Consequently, 
\beq\label{fejhw4hnfn4m5}
	E_{iz} = \rho_{u_{iz}} = \vec{v}_{u_{iz}} \vec{v}_{u_{iz}}^T
\eeq
for all $i,z$. By the rightmost inequality in~\eqref{fejhjsehrj45}, $\sum_{z \neq z'} \tr( E_{iz} E_{iz'}) = 0$. It follows that for every $z \neq z'$, $\tr( E_{iz} E_{iz'}) = \bigr( \vec{v}_{u_{iz}} \vec{v}_{u_{iz'}}^T \bigr)^2 = 0$, and we conclude that 
\beq\label{fewt4eg5}
	\vec{v}_{u_{iz}}^T \vec{v}_{u_{iz'}} = \delta_{zz'}.
\eeq

\begin{table}[!htbp]
\caption{Configurations of SVD++ \cite{koren2008factorization}, NMF \cite{lee2001algorithms}, UserKNN~\cite{resnick1994grouplens}, ItemKNN \cite{rendle2009bpr}, NNM~\cite{stark2015expressive}.}
\label{wefk4354}
 \centering
 \begin{tabular}{|c|l|l|}
 \hline
 & \multicolumn{1}{c|}{dataset} & \multicolumn{1}{c|}{configuration} \\
 \hline 
 \parbox[t]{2mm}{\multirow{3}{*}{\rotatebox[origin=c]{90}{NMF}}} 
 & ml-100K  	& num.factors=100, max.iter=10 \\[4pt]
 & ml-1M  		& num.factors=300, max.iter=10 \\[4pt]
  \hline
  \parbox[t]{2mm}{\multirow{3}{*}{\rotatebox[origin=c]{90}{ SVD++\hspace{28pt}}}} 
 & ml-100K 	& num.factors=5, max.iter=100, \\  && learn.rate=0.01 -max -1 -bold-driver,\\ && reg.lambda=0.1 -u 0.1 -i 0.1 -b 0.1 \\ && -s 0.001 \\[2pt]
 & ml-1M 		& num.factors=10, max.iter=80, \\  && learn.rate=0.005 -max -1 -bold-driver,\\ && reg.lambda=0.05 -u 0.05 -i 0.05  \\ && -b 0.05 -s 0.001 \\[2pt]
 \hline
 \parbox[t]{2mm}{\multirow{3}{*}{\rotatebox[origin=c]{90}{ItemKNN  }}} 
 & ml-100K  	& similarity=PCC, num.shrinkage=2500,\\ && num.neighbors=40 \\[4pt]
 & ml-1M  		& similarity=PCC, num.shrinkage=10, \\ && num.neighbors=80 \\[4pt]
 \hline
 \parbox[t]{2mm}{\multirow{3}{*}{\rotatebox[origin=c]{90}{UserKNN  }}} 
 & ml-100K  	& similarity=PCC, num.shrinkage=25, \\ && num.neighbors=60 \\[4pt]
 & ml-1M  		& similarity=PCC, num.shrinkage=25, \\ && num.neighbors=80\\[4pt]
 \hline
\parbox[t]{2mm}{\multirow{3}{*}{\rotatebox[origin=c]{90}{ NNM}}} 
 & ml-100K  	& $D=3$, max.iter $ = 16$ \\[6pt]
 & ml-1M  		& $D=8$, max.iter $ = 16$ \\[6pt]
 \hline
 \parbox[t]{2mm}{\multirow{3}{*}{\rotatebox[origin=c]{90}{ Quantum}}} 
 & ml-100K  	& $D=2$, max.iter $ = 16$ \\[9pt]
 & ml-1M  		& $D=3$, max.iter $ = 16$ \\[9pt]
 \hline
 \end{tabular}
 \end{table}

We note that $E_{iz}$ is invariant under the change $\vec{v}_{u_{iz}} \mapsto -\vec{v}_{u_{iz}}$. By assumption there exists a $Z$-dimensional NNM $\bigl( (\vec{p}_{u})_{u}, (\vec{E}_{iz})_{iz} \bigr)$ such that for all $i,z,z'$, $\vec{p}_{u_{iz}}^T \vec{E}_{iz'} = \delta_{zz'}$. By a sequence of arguments similar to the sequence of arguments that lead to~\eqref{fejhw4hnfn4m5}, we can show that
\beq\label{jh345m}
	\vec{E}_{iz} = \vec{p}_{u_{iz}} = \vec{e}_{n_{iz}}
\eeq
for some $n_{iz} \in [D]$. Here, $(\vec{e}_{i})_{j} = \delta_{ij}$ denotes the canonical orthonormal basis in $\AR^D$. Setting
\[
	\hat{E}_{iz} := \hat{\rho}_{u_{iz}} := \vec{e}_{n_{iz}} \vec{e}_{n_{iz}}^T
\]
we arrive at a quantum model induced by the NNM $\bigl( (\vec{p}_{u})_{u}, (\vec{E}_{iz})_{iz} \bigr)$~\footnote{Furthermore, we note that our assumptions imply that $\mathbb{P}_{u}[ \hat{E}_{i} = z ]$ is binary.}. Both $(\vec{e}_{n_{iz}})_{z}$ and $(\vec{v}_{u_{iz}})_{z}$ form orthonormal bases in $\AR^D$; recall~\eqref{fewt4eg5}. Let $U_{i}$ be the unitary transformation defined by 
\beq\label{few54ff}
	U_{i}: \, \vec{e}_{n_{iz}} \mapsto \vec{v}_{u_{iz}}
\eeq 
for all $z \in [Z]$. We claim that for all items $i,i' \in [I]$, $U_{i} = U_{i'}$. 

To prove this claim, assume $U_{i} \neq U_{i'}$ for some items $i \neq i'$. Hence, there must exist $k \in [D]$ such that 
\beq\label{dhjwhjwhr}
	U_{i} \vec{e}_{k} \neq U_{i'} \vec{e}_{k}.
\eeq
By~\eqref{jh345m} and $Z = D$ there exist $z,z' \in [Z]$ with the property that
\beq\label{fnewjk545}
	n_{iz} = k, \ \ \ n_{i'z'} = k
\eeq
so that by~\eqref{jh345m},
\beq\label{fekjwfklwejrk}
	1 = \vec{e}_{n_{iz}}^T \vec{e}_{n_{i'z'}} = \vec{E}_{iz}^T \vec{p}_{u_{i'z'}}.
\eeq
By~\eqref{few54ff} and~\eqref{dhjwhjwhr}, 
\[
	\vec{v}_{u_{iz}} = U_{i}\vec{e}_{k} \neq U_{i'}\vec{e}_{k} = \vec{v}_{u_{i'z'}}.
\]
Hence, by~\eqref{fejhw4hnfn4m5} and $\| \vec{v}_{u_{iz}} \|_{2} = 1$ for all $i,z$,
\[
	1 > ( \vec{v}_{u_{iz}}^T \vec{v}_{u_{i'z'}})^2 = \tr( E_{iz} \rho_{u_{i'z'}} ) = \vec{E}_{iz}^T \vec{p}_{u_{i'z'}}.
\]
This contradicts~\eqref{fekjwfklwejrk} and we conclude that $U_{i} = U_{i'}$ for all $i,i' \in [I]$.
 
It follows that there exists an unitary transformation $U$ with the property that 
\beq
	E_{iz} = \rho_{u_{iz}}  = U \; \vec{e}_{n_{iz}} \vec{e}_{n_{iz}}^T \; U^T
\eeq
(recall~\eqref{fejhw4hnfn4m5}, \eqref{jh345m}, and \eqref{few54ff}). Therefore, all the user and item matrices commute and we can extract the optimal NNM from the quantum model by computing the eigenvectors of $\rho_1, ..., \rho_D$.

\section{Configurations}

Table~\ref{wefk4354} specifies the configuration of the algorithms used in the numerical experiments aiming at small MAE and RMSE. The performance of existing algorithms was evaluated using the java library LibRec\footnote{http://www.librec.net/}.


\end{document}